\DeclareMathOperator*{\argmax}{arg\,max}
\newcommand{\trsp}{{\scriptscriptstyle\top}}
\newtheorem{example}{Example}
\newtheorem{theorem}{Proposition}
\newcommand{\mc}{\mathcal}
\newcommand{\mb}{\mathbb}
\newcommand\BibTeX{{\rmfamily B\kern-.05em \textsc{i\kern-.025em b}\kern-.08em
T\kern-.1667em\lower.7ex\hbox{E}\kern-.125emX}}
\begin{document}

\runninghead{Xue et al.: Implicit Motor Adaptation}

\title{Robust Contact-rich Manipulation through Implicit Motor Adaptation}

\author{Teng Xue\affilnum{1, 2}, Amirreza Razmjoo\affilnum{1, 2}, Suhan Shetty\affilnum{1, 2} and Sylvain Calinon \affilnum{1, 2}}

\affiliation{\affilnum{1}Idiap Research Institute, Martigny, Switzerland\\
\affilnum{2}École Polytechnique Fédérale de Lausanne (EPFL), Lausanne, Switzerland}

\corrauth{Teng Xue, Idiap Research Institute, Rue Marconi 19, 1920
Martigny, Switzerland.}

\email{teng.xue@idiap.ch}

\begin{abstract}
Contact-rich manipulation plays an important role in daily human activities. However, uncertain physical parameters often pose significant challenges for both planning and control. A promising strategy is to develop policies that are robust across a wide range of parameters. Domain adaptation and domain randomization are widely used, but they tend to either limit generalization to new instances or perform conservatively due to neglecting instance-specific information. \textit{Explicit motor adaptation} addresses these issues by estimating system parameters online and then retrieving the parameter-conditioned policy from a parameter-augmented base policy. However, it typically requires precise system identification or additional training of a student policy, both of which are challenging in contact-rich manipulation tasks with diverse physical parameters.
In this work, we propose \textit{implicit motor adaptation}, which enables parameter-conditioned policy retrieval given a roughly estimated parameter distribution instead of a single estimate. We leverage tensor train as an implicit representation of the base policy, facilitating efficient retrieval of the parameter-conditioned policy by exploiting the separable structure of tensor cores. This framework eliminates the need for precise system estimation and policy retraining while preserving optimal behavior and strong generalization. We provide a theoretical analysis to validate the approach, supported by numerical evaluations on three contact-rich manipulation primitives. Both simulation and real-world experiments demonstrate its ability to generate robust policies across diverse instances.

Project website: \href{https://sites.google.com/view/implicit-ma}{https://sites.google.com/view/implicit-ma}.

\end{abstract}

\keywords{Contact-rich manipulation, robust control, sim-to-real transfer, tensor train decomposition}

\maketitle

\section{Introduction}

Robot manipulation usually involves multiple contact-rich manipulation primitives, such as \texttt{Push} and \texttt{Pivot}, leading to a long-horizon problem characterized by hybrid types of variables (discrete and continuous). The resulting combinatorial complexity poses significant challenges to most planning and control approaches. Instead of treating long-horizon manipulation as a whole, one way is to decompose the long-horizon process into several simple contact-rich manipulation primitives and then sequence them using PDDL planners \citep{mcdermott1998pddl, Xue24RSS, cheng2023league} or Large Language Models \citep{driess2023palm}. Since manipulation primitives are typically sequenced by naive symbolic planners that lack geometric or motion-level information, it is crucial to develop primitives that are robust to diverse instances with varying physical parameters, such as shape, mass, and friction coefficient. For example, once a \texttt{push} primitive is scheduled by a high-level symbolic planner, it should be capable of pushing objects with different physical properties from any initial configuration to their targets.

\begin{figure*}[h]
\centering
\begin{tabular}
{>{\centering\arraybackslash}m{2.4cm} |>{\centering\arraybackslash}m{2.2cm}>{\centering\arraybackslash}m{2.2cm}>{\centering\arraybackslash}m{2.2cm}>{\centering\arraybackslash}m{2.2cm}>{\centering\arraybackslash}m{2.2cm}}
 \includegraphics[width=2.4cm]{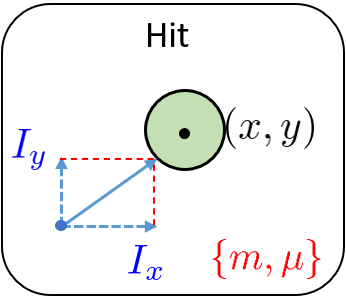} & \includegraphics[width=2.5cm]{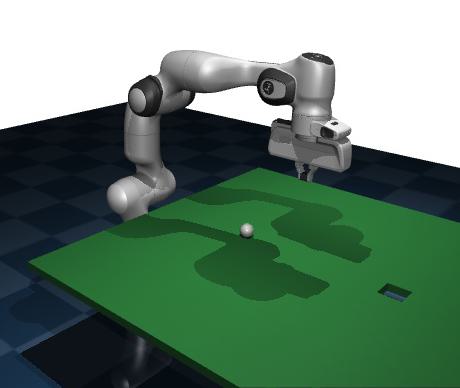} & \includegraphics[width=2.5cm]{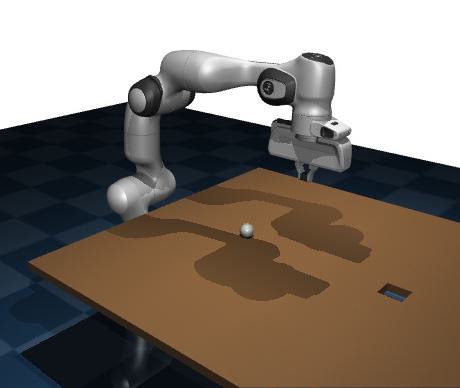} & \includegraphics[width=2.5cm]{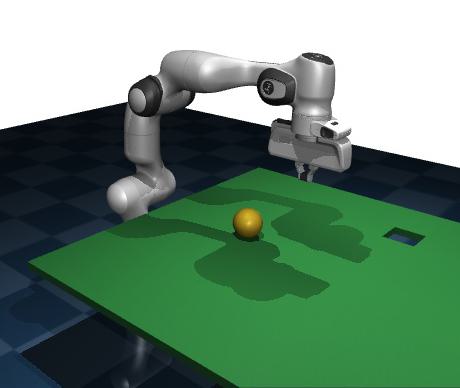} & \includegraphics[width=2.5cm]{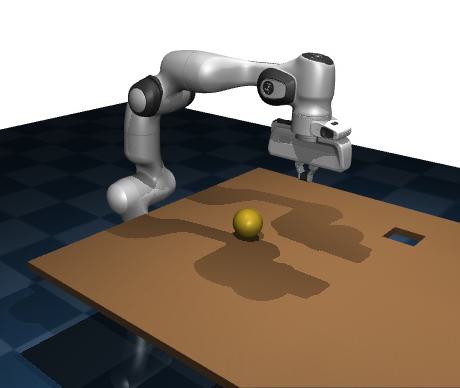} & \includegraphics[width=2.5cm]{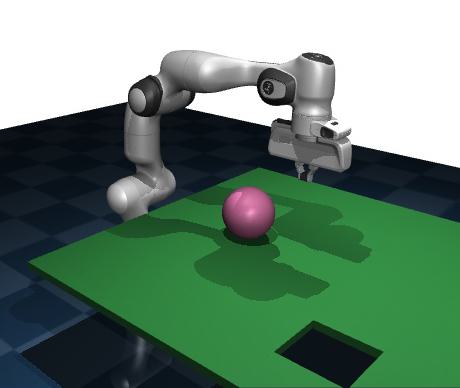} \\

\includegraphics[width=2.4cm]{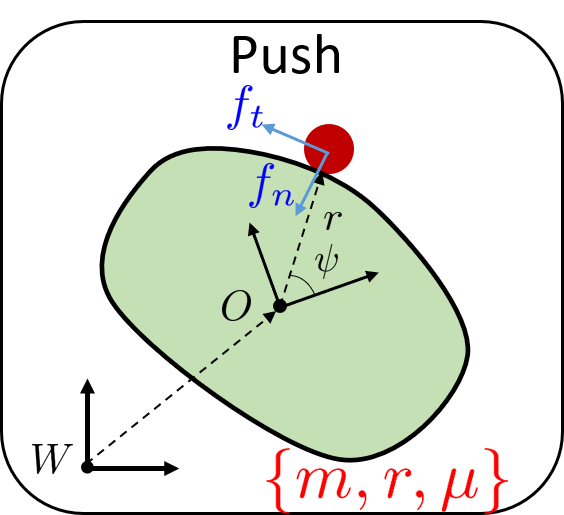} & \includegraphics[width=2.5cm]{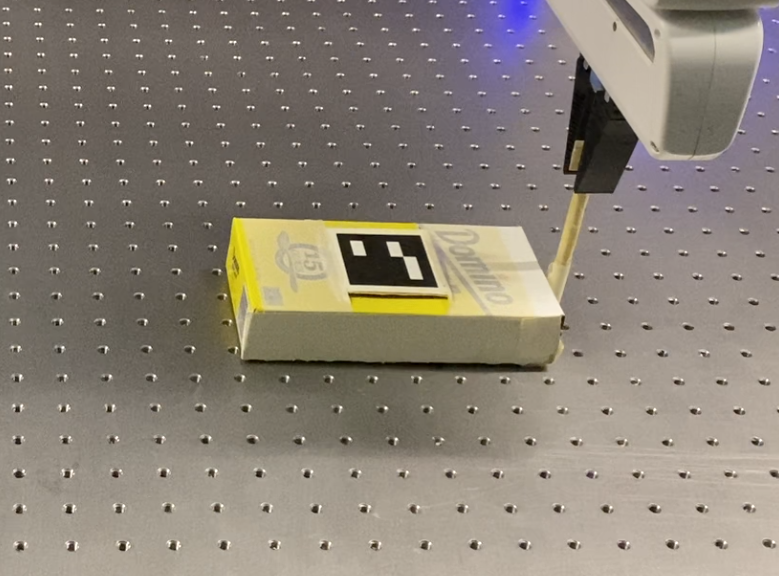} & \includegraphics[width=2.5cm]{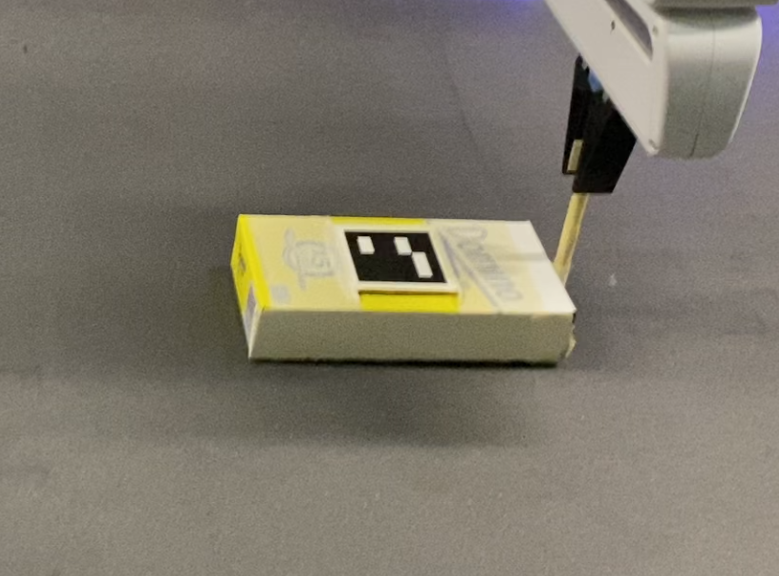} & \includegraphics[width=2.5cm]{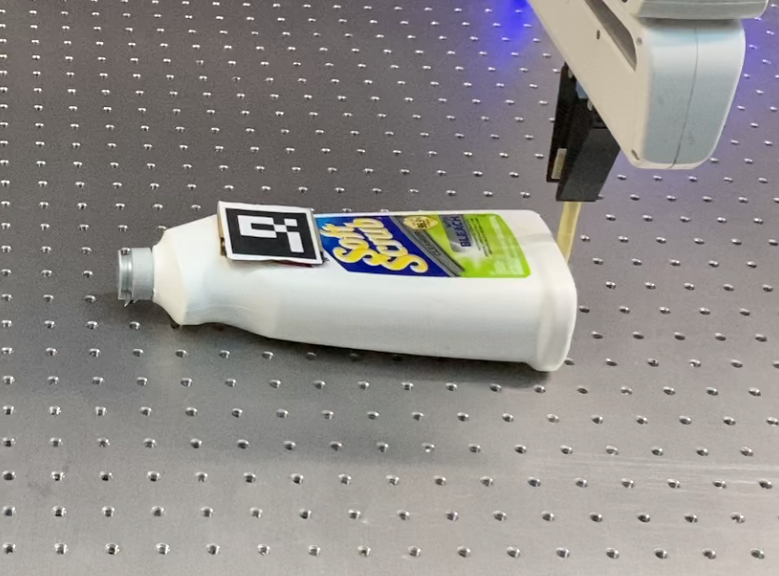} & \includegraphics[width=2.5cm]{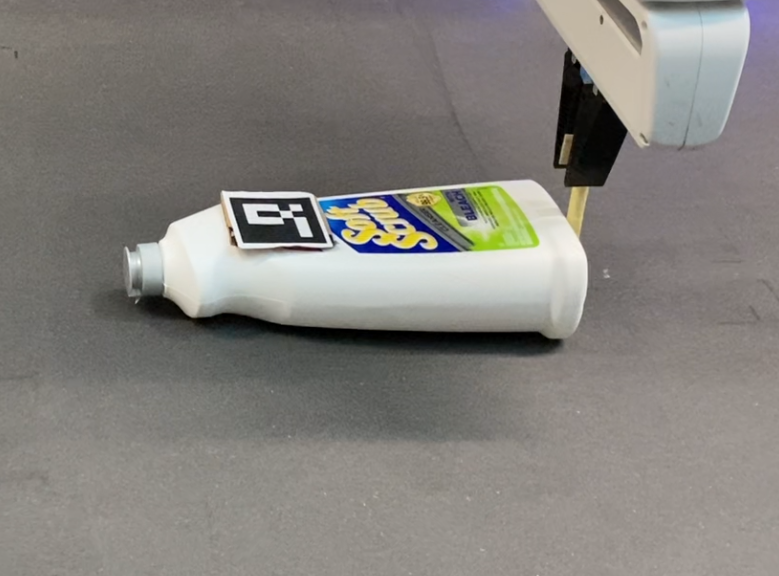} & \includegraphics[width=2.5cm]{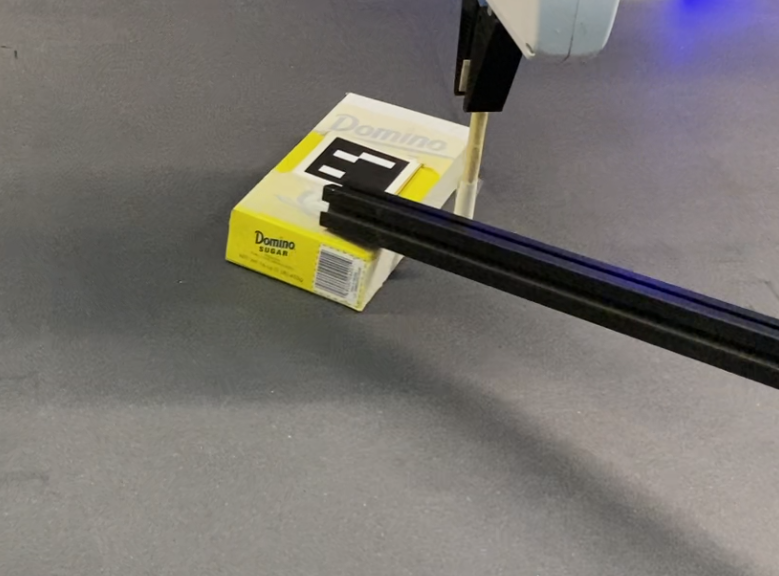} \\

\includegraphics[width=2.4cm]{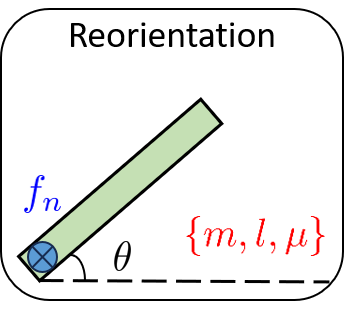} & \includegraphics[width=2.5cm]{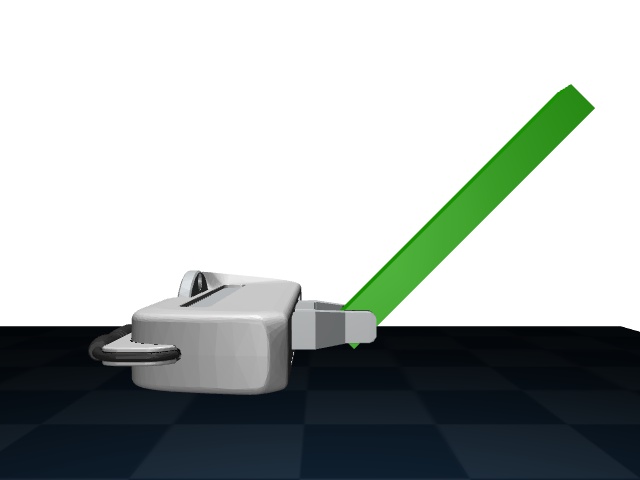} & \includegraphics[width=2.5cm]{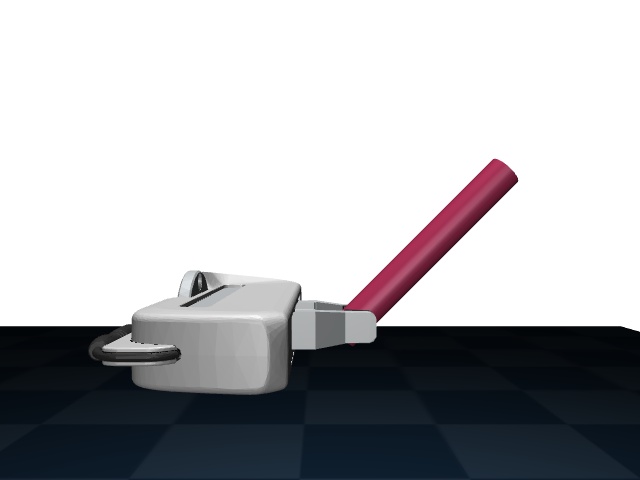} & \includegraphics[width=2.5cm]{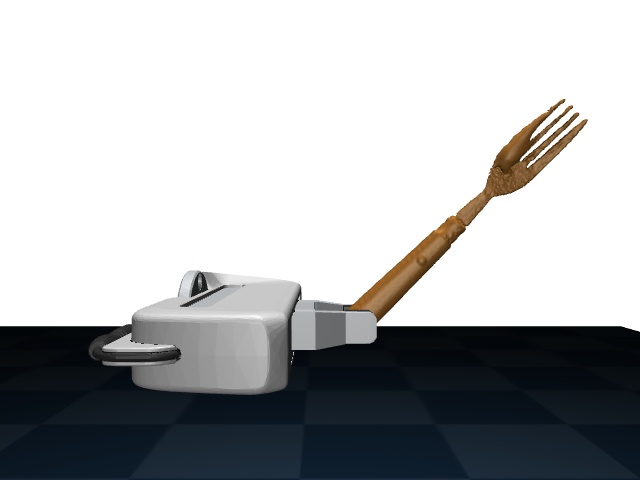} & \includegraphics[width=2.5cm]{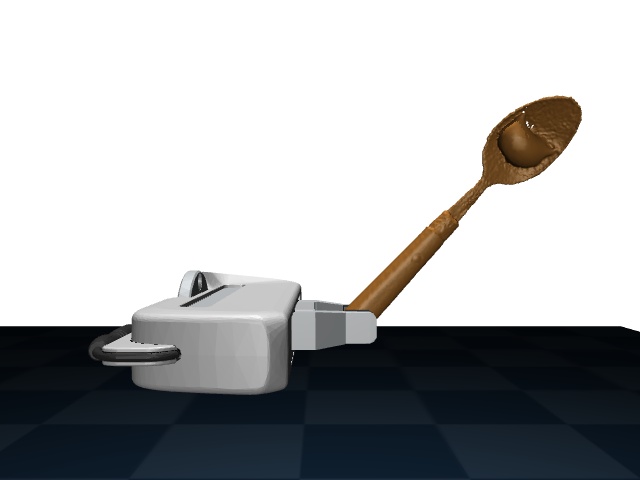} & \includegraphics[width=2.5cm]{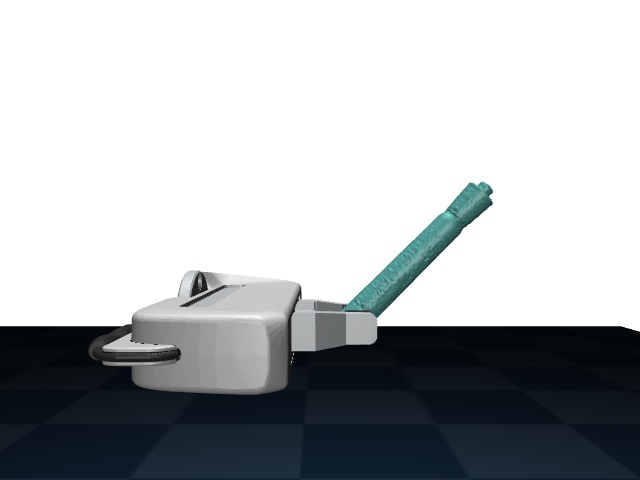} \\

\end{tabular}
\caption{\textbf{Deployment of the learned policy in a variety of contact-rich manipulation tasks.} The left images illustrate the state and action spaces for each primitive (\texttt{Hit}, \texttt{Push}, \texttt{Reorientation}) in black and blue, respectively, along with the parameter spaces represented by the red variables: $m$ for mass, $\mu$ for the friction coefficient, $r$ for the radius, and $l$ for the length. A single policy is trained for each primitive and deployed directly across a wide range of objects with varying shapes, weights, and friction parameters, while preserving instance-specific optimal behaviors.}
\label{fig:setup}
\end{figure*}

Many approaches have been proposed for robust planning and control through contacts, such as H-$\infty$ control \citep{franco2006robust}, Sliding Mode Control (SMC) \citep{shtessel2014sliding, edwards1998sliding} or Model Predictive Control (MPC) \citep{morari1999model, garcia1989model, lopez2019dynamic}. However, these methods often overlook instance-specific information and focus primarily on worst-case scenarios, resulting in conservative behaviors. To address this limitation, online system identification can be integrated to enable adaptive MPC \citep{adetola2009adaptive, adetola2011robust}. However, the diverse parameters and non-smooth contact dynamics result in computationally expensive parameter estimation and trajectory optimization, leading to poor performance for real-time control.

Learning manipulation policies that can quickly react to the physical world is thus required to advance further. Widely used approaches include reinforcement learning (RL) \citep{kaelbling1996reinforcement, kober2013reinforcement} and imitation learning (IL) \citep{Billard08chapter, schaal1999imitation}. Due to the high cost of data collection in real-world contact-rich scenarios, policy learning in simulation and transferring it to the real world has become a common strategy. To enable robust sim-to-real transfer, commonly adopted methods include domain randomization (DR) \citep{tobin2017domain, muratore2018domain} and domain adaptation (DA) \citep{bousmalis2018using, arndt2020meta, chebotar2019closing}. DR assumes that environment parameters follow specific distributions (e.g., normal or uniform), and the goal is to maximize (or minimize) the expected reward (or cost) under randomly sampled parameters during training. While this method provides good generalization to diverse instances, it often results in suboptimal and high-variance behaviors due to restrictive assumptions about the parameter distributions. In contrast, DA incorporates more instance-specific data during policy training, leading to optimal behaviors in the specific target domain but sacrificing generalization to other instances without additional fine-tuning.

Combining DR and DA to harness the benefits of both is a sensible strategy \citep{muratore2022neural, qi2023hand}, but how to effectively balance them is still an open question. It typically involves online system identification to capture domain-specific information and multi-domain learning to account for diverse instances. One promising approach to address this trade-off is to learn a base policy that uses diverse privileged environmental information as input and then relies on proprioceptive history to estimate the system parameters or a latent low-dimensional embedding online. For simplicity, we refer to both as the system parameters in this article, which are then used as input to the base policy to generate domain-specific optimal behaviors. This framework, referred to as Teacher-Student Policy in \citep{lee2020learning} and Rapid Motor Adaptation in \citep{kumar2021rma} and \citep{qi2023hand}, is more broadly termed \textit{explicit motor adaptation (EMA)} in this article, as the policies are typically represented as explicit feedforward functions that directly output actions given observations and estimated parameters. It can be viewed as a variant of DA that enhances generalization by including online parameter estimation. 

This approach has demonstrated impressive performance in quadrupedal locomotion across diverse terrains \citep{lee2020learning, kumar2021rma} and robust in-hand manipulation \citep{qi2023hand}. However, it typically relies on high-quality system identification \citep{qi2023hand} or additional training of a student policy to mimic the teacher policy \citep{lee2020learning}. These requirements pose significant challenges in contact-rich manipulation tasks, as they demand extensive data collection and considerable effort in model design and training, but remain difficult to achieve.

Instead, we propose \textit{implicit motor adaptation (IMA)} in this work. IMA can be seen as an elaborated version of DR, leveraging instance-aware distributions instead of being parameter-blind, while also retaining the advantage of DR in not relying heavily on good-quality system identification. It addresses sim-to-real transfer as a stochastic problem, pushing the limits for finding an optimal policy under sim-to-real uncertainty without requiring policy retraining for new instances. The policy is implicitly represented as $\argmax$ of the parameter-conditioned advantage function
\begin{equation}
    \bm{u}^* = \argmax_{\bm{u} \in \mathcal{U}} A({\bm{h}}, \bm{x}, \bm{u}) \quad \text{instead of} \quad \bm{u}^* = \pi_{\theta}({\bm{h}}, \bm{x}),
\end{equation}
where $\bm{h}$ is the proprioceptive history. EMA estimates the system parameter from $\bm{h}$ and directly inputs it into the base policy $\pi_{\theta}$ to output the action. In contrast, thanks to the implicit policy representation, IMA can retrieve the parameter-conditioned policy using a probabilistic estimate based on $\bm{h}$. This makes it particularly effective for contact-rich manipulation tasks, in which contact parameters identification is often challenging. Although implicit representations have been explored in imitation learning \citep{florence2022implicit} and as components of reinforcement learning \citep{haarnoja2017reinforcement, wang2022diffusion}, to the best of our knowledge, our work is the first to investigate this idea in the context of robust policy learning.

However, computing the parameter-conditioned advantage function and retrieving the policy can be computationally expensive, making it difficult to apply in real-world settings, particularly in contact-rich manipulation tasks where the robot has to continuously interact with its surroundings. To address this issue, we employ Tensor Train (TT) \citep{oseledets2011tensor} as function representation, enabling fast computation of the advantage function and efficient online policy retrieval.

This article builds on our previous work \citep{Xue24CORL}, where we introduced domain contraction, a mechanism for retrieving policies with a known probabilistic distribution. Here, we extend this concept by integrating domain contraction into a comprehensive sim-to-real transfer framework, namely \textit{implicit motor adaptation}. The framework incorporates an online system identification module that probabilistically estimates environmental parameters from proprioceptive history and retrieves the parameter-conditioned policy for real-world manipulation from a base policy trained in simulation.

In summary, beyond employing Tensor Train as an implicit policy representation and utilizing domain contraction for probabilistic policy retrieval, this paper introduces the following new contributions compared to \citep{Xue24CORL}:

1) A novel framework called \textit{implicit motor adaptation (IMA)} for robust contact-rich manipulation, highlighting the promise of implicit representation for robust policy learning compared with the widely used \textit{explicit motor adaptation (EMA)} framework.

2) A probabilistic system adaptation module allowing the robot to adapt its behavior in a probabilistic manner without relying on high-quality system identification or additional student policy training.

3) Theoretical proofs and numerical experiments demonstrating the effectiveness of IMA compared to EMA.

\section{Related Work}
\subsection{Learning for Contact-rich Manipulation}

 Many approaches have been proposed to learn manipulation policies for contact-rich manipulation, including Behavior Cloning (BC) \citep{florence2022implicit, torabi2018behavioral}, Deep Reinforcement Learning (DRL) \citep{pertsch2021accelerating, qi2023hand, lee2020learning}, and Approximate Dynamic Programming (ADP) \citep{powell2007approximate, werbos1992approximate}. In our work, we employ an ADP approach called Tensor Train Policy Iteration (TTPI) \citep{Shetty24ICLR} for policy learning, where the state space is augmented with system parameters to enable subsequent parameter-conditioned policy retrieval given different instances. However, our proposed approach is flexible and can be integrated with any policy learning technique, provided the policy can be implicitly represented by advantage functions or energy-based functions.

\subsection{Implicit Policy Representation} 
Instead of being expressed as an explicit function, the action policy can be described implicitly through function optimization. Implicit representations have been widely used in various approaches, including energy-based models (EBMs) \citep{lecun2006tutorial, du2019implicit}, diffusion models \citep{yang2023diffusion, ho2020denoising}, and tensor networks \citep{Orus2019tensor, stoudenmire2016supervised}. In imitation learning, behavior cloning can be formulated using EBMs \citep{florence2022implicit}, highlighting several advantages such as multimodal representation and long-horizon visuomotor policy learning. In methods based on dynamic programming, the control policy is naturally obtained by optimizing objective functions, making it more intuitive to represent the policy implicitly. For example, reactive policy learning can be reformulated as solving sequence-of-constraints model predictive control (MPC) \citep{toussaint2022sequence}, demonstrating strong generalization and improved compositionality using implicit functions. Similarly, implicit models can be utilized as policy representations in reinforcement learning, emphasizing their expressiveness in action generation \citep{haarnoja2017reinforcement,wang2022diffusion}. For high-dimensional action spaces, Tensor Train (TT) can be emplyed as a low-rank representation for state-value and advantage functions, enabling implicit retrieval of control actions \citep{Shetty24ICLR,tal2018continuous}. Our work is inspired by the performance of implicit representation in control policy learning and aims to explore its potential for robust policy learning,  which has not been investigated so far.

\subsection{Sim-to-real transfer} 
Obtaining large amounts of real-world data for primitive learning is challenging. Therefore, robot learning in simulation and transferring to the real world is a promising idea \citep{peng2018sim}. However, the reality gap between simulation and the real world poses a significant challenge to such idea. 
If the target domain is known and specific, Domain Adaptation (DA) \citep{bousmalis2018using, arndt2020meta, chebotar2019closing} can be an effective method, but its reliance on domain-specific data limits the generalization to other scenarios. Alternatively, Domain Randomization (DR) \citep{tobin2017domain} seeks to develop a robust policy by introducing random variations into the simulation parameters. While this method offers good generalization, it often results in suboptimal and high-variance behaviors due to the restrictive assumptions about environment parameter distribution (e.g., normal or uniform). 

Combining DA and DR can be a promising strategy to balance generalization and optimal performance. The basic idea is to adapt the parameter distribution by leveraging differences between simulated and reference environment \citep{mehta2020active, muratore2022neural, ramos2019bayessim}. However, policies developed through such methods are often tailored to the reference environment or target domain. Recently, explicit motor adaptation (EMA) \citep{yu2017preparing} has demonstrated remarkable success in learning robust locomotion \citep{lee2020learning, kumar2021rma} and manipulation primitives \citep{qi2023hand}. In this approach, a teacher policy is trained in simulation with various domain parameters, and a student policy learns to replicate this behavior, with a low-dimensional embedding of proprioceptive history as input. Our work closely follows this paradigm but introduces a more efficient way for learning the parameter-augmented teacher policy using tensor approximation. Additionally, our method eliminates the need for high-quality system identification and subsequent student policy training, as the parameter-conditioned robust policy can be directly derived from the teacher policy given a rough parameter distribution.

\subsection{Tensor Train for function approximation} 
A multidimensional function can be approximated by a tensor, where each element in the tensor is the value of the function given the discretized inputs. The continuous value of the function can then be obtained by interpolating among tensor elements. However, storing the full tensor for a high-dimensional function can be challenging. To address this issue, Tensor Train (TT) was proposed to approximate the tensor using several third-order tensor cores. The widely used methods include TT-SVD \citep{oseledets2011tensor} and TT-cross \citep{oseledets2010_ttcross1}. Furthermore, TTGO \citep{shetty2016tensor} was proposed for finding globally optimal solutions given functions in TT format. TTPI \citep{Shetty24ICLR} was then introduced to learn control policies through tensor approximation, showing superior performance on several hybrid control problems. Logic-Skill Programming (LSP) \citep{Xue24RSS} expands the operational space of TTPI by incorporating first-order logic to sequence policies. Building on TTGO and TTPI, our work extends these methods to learn robust policies, which further enhances the capabilities of LSP. We additionally demonstrate that the TT format is a suitable structure for efficient parameter-conditioned policy retrieval.

\section{Background}
\label{sec:background}
\subsection{Tensors as Discrete Analogue of a Function}

A multivariate function $F(x_1,\ldots, x_d)$ defined over a rectangular domain made up of the Cartesian product of intervals (or discrete sets) $I_1 \times \cdots \times I_d$ can be discretized by evaluating it at points in the set $\mc{X} = \{ (x^{i_1}_1,\ldots,x^{i_d}_d): x^{i_k}_k \in I_k, i_k \in \{1,\ldots, n_k\} \}$. This gives us a tensor $\bm{\mc{F}}$, a discrete version of $F$, where $\bm{\mc{F}}_{(i_1,\ldots, i_d)} = F(x^{i_1}_1,\ldots,x^{i_d}_d), \forall (i_1,\ldots, i_d)\in \mc{I}_{\mc{X}}$, and $\mc{I}_{\mc{X}} = \{ (i_1,\ldots,i_d): i_k \in \{1,\ldots, n_k\}, k \in \{1,\ldots, d\} \}$. The value of $F$ at any point in the domain can then be approximated by interpolating between the elements of the tensor $\bm{\mc{F}}$.

\subsection{Tensor Networks and Tensor Train Decomposition}

Naively approximating a high-dimensional function using a tensor is intractable due to the combinatorial and storage complexities of the tensor $(\mc{O}(n^d))$. Tensor networks mitigate the storage issue by decomposing the tensor into factors with fewer elements, akin to using Singular Value Decomposition (SVD) to represent a large matrix. In this paper, we explore the use of Tensor Train (TT), a type of Tensor Network that represents a high-dimensional tensor using several third-order tensors called \textit{cores}, as shown in Fig.~\ref{fig:tt_format}.
\begin{figure}[htbp]
	\centering
	\begin{minipage}{0.45\textwidth}
		\centering
		\includegraphics[width=\textwidth]{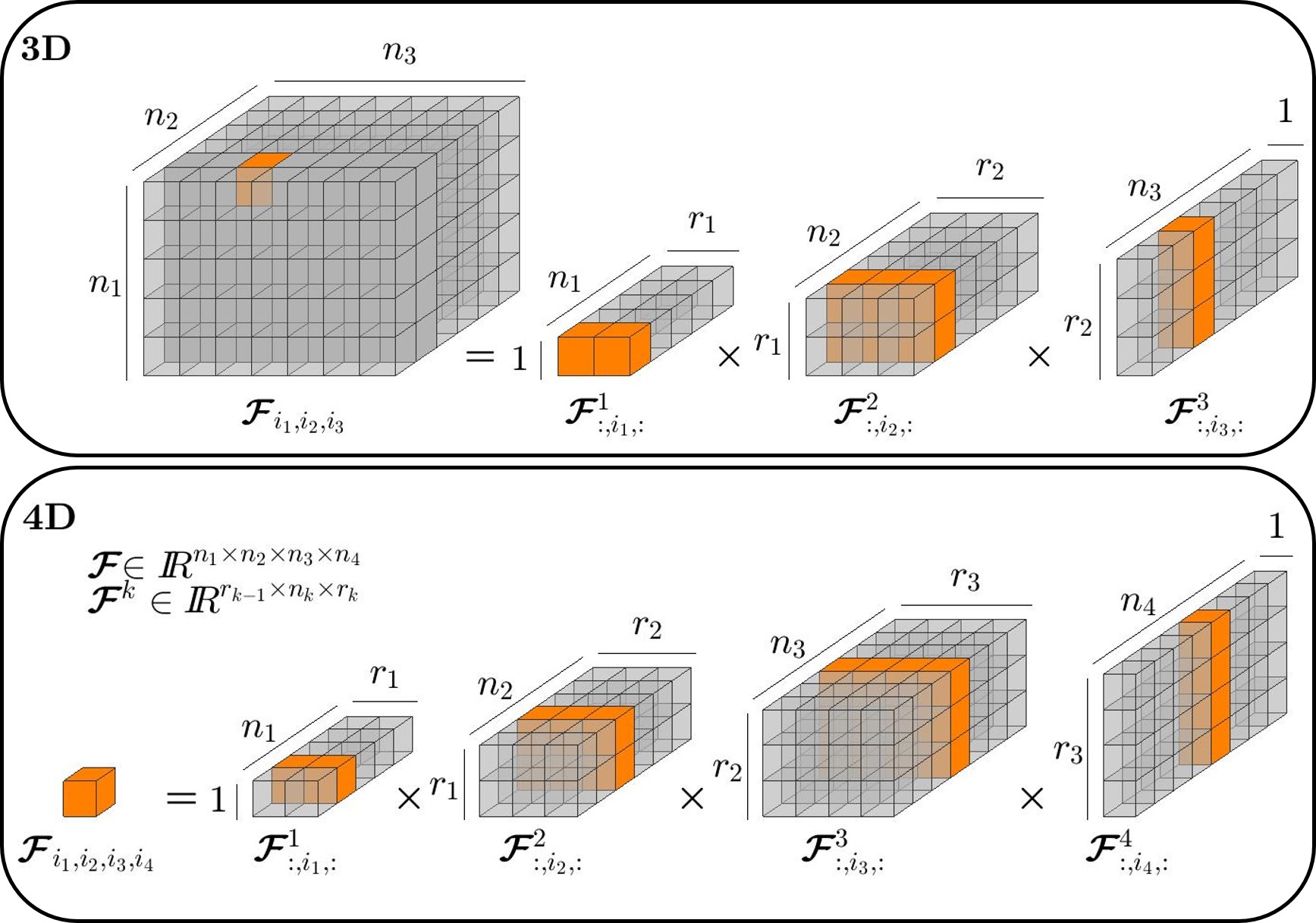} % Replace with the figure's file name or code
	\end{minipage}
	\hfill
		\caption{TT decomposition generalizes matrix decomposition techniques to higher-dimensional arrays. In TT format, an element in a tensor can be obtained by multiplying specific slices of the core tensors. The figure presents examples of third-order, and fourth-order tensors. Image adapted from \cite{shetty2016tensor}.}
		\label{fig:tt_format}
\end{figure}

We can access the element indexed $(i_{1},\ldots,i_{d})$ of the tensor in this format simply by multiplying matrix slices from the cores:
\begin{equation}
	\label{eq:tt_rep}
	\bm{\mc{F}}_{(i_{1},\ldots,i_{d})} = \bm{\mc{F}}^1_{:,i_1,:}\bm{\mc{F}}^2_{:,i_2,:}\cdot\cdot\cdot \bm{\mc{F}}^d_{:,i_d,:},
\end{equation}
where $\bm{\mc{F}}^k_{:,i_k,:} \in \mb{R}^{r_{k-1} \times r_k}$ represents the $i_{k}$-th frontal slice (a matrix) of the third-order tensor $\bm{\mc{F}}^k$. For any given tensor, there always exists a TT decomposition \citep{oseledets2011tensor}. This low-rank structure further facilitates sampling and optimization for robot planning and control.

There are several ways to acquire a TT model, including TT-SVD \citep{oseledets2011tensor} and TT-Cross \citep{oseledets2010_ttcross1,savostyanov2011_ttcross2}. TT-SVD extends the SVD decomposition from matrix level to a high-dimensional tensor level. However, it needs to store the full tensor first, which is impractical to very high-dimensional functions. TT-Cross solves this limitation by selectively evaluating function $F$ on a subset of elements, avoiding the need to store the entire tensor. 

\subsection{Function approximation using Tensor Train}
\label{sec:function_apprx}
Given the discrete analogue tensor $\bm{\mc{F}}$ of a function $F$, we obtain the continuous approximation by spline-based interpolation of the TT cores corresponding to the continuous variables. For example, we can use linear interpolation for the cores (i.e., between the matrix slices of the core) and define a matrix-valued function corresponding to each core $k \in \{1,\ldots, d\}$,
\begin{equation}
	\bm{F}^k(x_k) = \frac{x_k-x^{i_k}_k}{x^{i_k+1}_k-x^{i_k}_k}\bm{\mc{F}}^k_{:,i_k+1,:} +\frac{x^{i_k+1}_k-x_k}{x^{i_k+1}_k-x^{i_k}_k}\bm{\mc{F}}^k_{:,i_k,:},
    \label{eq: interp_tt}
\end{equation}
where $x^{i_k}_k\le x_k \le x^{i_k+1}_k$ and $\bm{F}^k: I_k \subset \mb{R} \rightarrow \mb{R}^{r_{k-1} \times r_k}$ with $r_0=r_d=1$. This induces a continuous approximation of $F$ given by
\begin{equation}
	\label{eq: continuous_tt}
	F(x_1,\ldots,x_d) \approx \bm{F}^1(x_1) \cdots \bm{F}^d(x_d).
\end{equation}

\noindent This allows us to selectively do the interpolation only for the cores corresponding to continuous variables, and hence we can represent functions in TT format whose variables could be a mix of continuous and discrete elements.

\subsection{Global Optimization using Tensor Train (TTGO)}
% To solve an optimization problem which aims to find the optimal $\bm{x}$ that maximizes the objective function $f(\bm{x})$, the TT-Cross algorithm can be used to compute the discrete analogue approximation of $f(\bm{x})$ in the TT format.

% TTGO \citep{shetty2016tensor} frames this problem as maximizing an unnormalized probability density function (PDF) $F(\bm{x}, \bm{u})$, which is derived from $c(\bm{x}, \bm{u})$ through a monotonically non-increasing transformation. The TT-Cross algorithm is then used to compute the discrete analogue approximation of the unnormalized PDF, i.e., $\bm{\mc{F}}$, in the TT format. After approximating the joint distribution, $\bm{\mc{F}}^{\bm{x}_t}$ can be obtained by conditioning on the given task parameter $\bm{x} =\bm{x}_t \in \Omega_{\bm{x}}$.

In optimization problems, the goal is to find the decision variables $\bm{x}$ that maximize the objective function $f(\bm{x})$. TTGO~\citep{shetty2016tensor} frames this problem as the maximization of an unnormalized probability density function (PDF) $F(\bm{x})$, which is derived from $f(\bm{x})$ through a monotonically non-increasing transformation. The TT-Cross algorithm is then used to compute the TT approximation of $F(\bm{x})$, denoted by $\bm{\mc{F}}$:
\begin{align}
    &F(x_1, \dots, x_d) 
    \approx \sum_{\gamma_1=1}^{r_1} \sum_{\gamma_2=1}^{r_2} \cdots \sum_{\gamma_{d-1}=1}^{r_{d-1}} \notag \\
    &\quad \bm{\mc{F}}^1(1,\, x_1,\,\gamma_1)\bm{\mc{F}}^2(\gamma_1,\, x_2,\,\gamma_2) \cdots \bm{\mc{F}}^d(\gamma_{d-1},\, x_d,\,1),
\end{align}
where each $\bm{\mc{F}}^k$ is a TT core of size \(r_{k-1} \times n_k \times r_k\), with \(n_k\) denoting the number of discretization points for \(x_k\), and the TT ranks \(r_k\) typically remaining small even in high dimensions.

Given the TT representation of $F(\bm{x})$ as $\bm{\mc{F}}$, we can exploit its separable structure to perform \emph{coordinate-wise} optimization by iteratively selecting and refining promising candidates across dimensions. In contrast to naive methods that require exhaustive search over an exponentially large grid (i.e., \( \mathcal{O}(n^d) \)), the TT format enables localized and efficient search for a global (or near-global) optimum with significantly lower complexity (i.e., \( \mathcal{O}(ndr^2) \)). Furthermore, this approach is entirely gradient-free, making it well-suited for high-dimensional and non-convex landscapes. For further technical details on these procedures, we refer the readers to~\citep{oseledets2011tensor, sozykin2022ttopt, shetty2016tensor}.

\subsection{Generalized Policy Iteration using Tensor Train (TTPI)}
Optimal control of dynamical systems with nonlinear dynamics poses a substantial challenge in robotics. To address this, Generalized Policy Iteration using Tensor Train (TTPI) was proposed \citep{Shetty24ICLR}, leveraging tensor approximation and approximate dynamic programming. This method approximates state-value and advantage functions using Tensor Train (TT), effectively mitigating the curse of dimensionality. The low-rank structure of TT enables the use of TTGO to find near-global solutions during policy retrieval from the advantage function, even under complex nonlinear system constraints, surpassing the capabilities of existing neural network-based algorithms. TTPI has demonstrated superior performance on several hybrid control problems. In this article, we further extend this approach to learning robust manipulation primitives for contact-rich tasks involving diverse contact parameters.

\section{Problem formulation}
Let us consider a discrete-time dynamical system:
\begin{itemize}
    \item \textbf{State} $\bm{x} \in  \Omega_{\bm{x}} \subseteq \mathbb{R}^m$: The system's state space.
    \item \textbf{Action} $\bm{u} \in  \Omega_{\bm{u}}\subseteq \mathbb{R}^n$: The system's action space.
    \item \textbf{Parameters} $\bm{\alpha} \in \Omega_{\bm{\alpha}}\subseteq \mathbb{R}^d$: The unknown or uncertain physical parameters of the system, such as mass, friction coefficient, etc. 
    \item \textbf{Dynamics Model} $f: \Omega_{\bm{\alpha}} \times \Omega_{\bm{x}} \times \Omega_{\bm{u}} \to \Omega_{\bm{x}}$:  The system's next state depends on current state $\bm{x}_t$, action $\bm{u}_t$ and the system parameters $\bm{\alpha}$. 
    \item \textbf{Reward function} $R: \Omega_{\bm{x}}\times \Omega_{\bm{u}} \to \mathbb{R}$: The immediate reward, which depends on the current state $\bm{x}_t$, action $\bm{u}_t$.
    \item \textbf{Discount factor} $\gamma \in [0,1]$.

\end{itemize}
This system can be described by a Markov Decision Process (MDP) $\mathcal{M} = \{\Omega_{\bm{x}}, \Omega_{\bm{u}}, \Omega_{\bm{\alpha}}, f, R, \gamma \}$. The objective is to find a policy $\pi$ that maximizes the expected cumulative reward
\begin{equation}
 \begin{aligned}
\mathbb{E}_{\pi} [ \sum_{t=0}^{\infty} \gamma^t R(\bm{x}_{t}, \pi(\bm{\alpha}, \bm{x}_t)) \mid \bm{x}_{0}\!=\! \bm{x} ].
    \label{eq: determin_MDP}
\end{aligned}
\end{equation}
However, knowing the exact value of $\bm{\alpha}$ in the real world is intractable. This information can, however, be estimated using multimodal sensors \citep{lee2020making} or proprioceptive history \citep{bianchini2023simultaneous, lee2020learning}. Given the sensor noise and model inaccuracy, it is better to estimate the system parameters as a probabilistic distribution, denoted as $\hat{\bm{\alpha}} \sim P(\hat{\bm{\alpha}})$. Therefore, our approach aims to find a policy $\pi$ that maximizes 
\begin{equation}
 \begin{aligned}
\mathbb{E}_{\hat{\bm{\alpha}} \sim P(\hat{\bm{\alpha}})} \left[   \mathbb{E}_{\pi}  \left( \sum_{t=0}^{\infty} \gamma^t R(\bm{x}_{t}, \pi(\hat{\bm{\alpha}}, \bm{x}_t)) \mid \bm{x}_{0}\!=\! \bm{x}  \right) \right].
    \label{eq: MDP}
\end{aligned}
\end{equation}

Solving \eqref{eq: MDP} involves knowing the probability distribution $P(\hat{\bm{\alpha}})$ and then finding the policy $\pi$ that maximizes the expected cumulative reward over this distribution. One approach to address this problem is to learn control policies separately for each instance with its specific parameter distribution, akin to performing domain adaptation within each target domain. However, a single manipulation primitive typically involves numerous instances, making this approach too time-consuming and impractical. An alternative is to learn an augmented base policy that encompasses multiple instances and retrieve instance-specific policies using estimated instance information, similarly to the \textit{explicit motor adaptation (EMA)} strategy. However, EMA cannot solve \eqref{eq: MDP} as it focuses only on one specific parameter instance rather than a distribution.

In this work, we present an \textit{implicit motor adaptation (IMA)} approach that addresses the problem of probabilistic system identification and, more importantly, the retrieval of a robust policy conditioned on the distribution of parameters. Additionally, to reduce the computational burden of online control, we employ tensor factorization as an implicit representation of the base policy, enabling efficient parameter-conditioned policy retrieval through tensor core products.

\section{Implicit Motor Adaptation (IMA)}
\label{sec:method}

In this section, we provide a detailed introduction to \textit{implicit motor adaptation (IMA)}, focusing on its three key components: 1) parameter-augmented base policy learning, 2) probabilistic system adaptation conditioned on proprioceptive history, and 3) parameter-conditioned policy retrieval via domain contraction. The full pipeline is shown in Fig.~\ref{fig:pipeline}.

\begin{figure*}[t]
	\centering
	\includegraphics[width=0.9\textwidth]{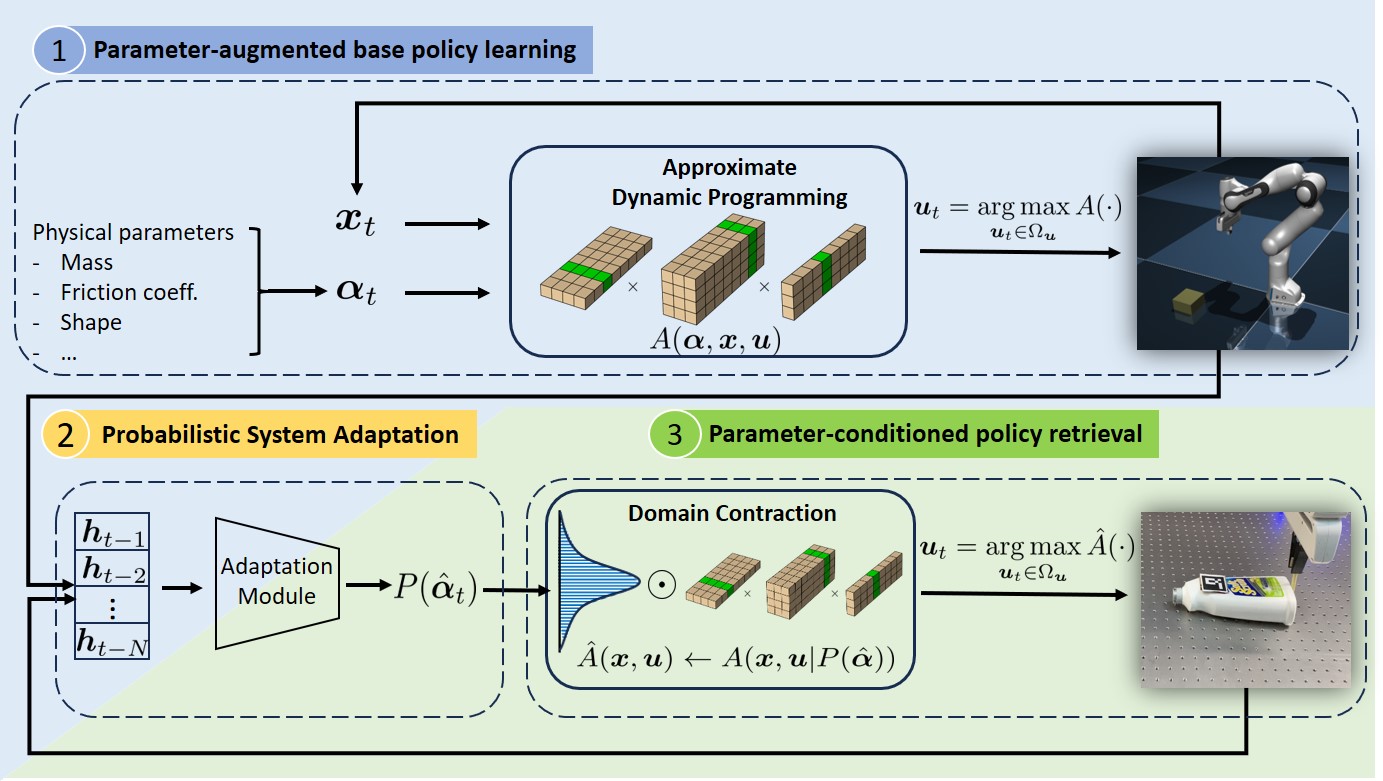}
	\caption{Pipeline of the proposed approach, including (1) parameter-augmented base policy learning, (2) probabilistic system adaptation with proprioceptive history, and (3) parameter-conditioned policy retrieval. The base policy and parameter-conditioned policy are implicitly represented by the corresponding advantage functions $A(\bm{\alpha}, \bm{x}, \bm{u})$ and $\hat{A}(\bm{x}, \bm{u})$, respectively. Blue-shaded modules are trained in simulation, and green-shaded ones are used in deployment, with the probabilistic system adaptation module bridging both stages.}
	\label{fig:pipeline}
\end{figure*} 

\subsection{Parameter-augmented Base Policy Learning}
\label{sec: param_aug_pl}

Similarly to the multi-goal setting in DRL \citep{liu2022goal}, we first train a parameter-augmented policy by augmenting the state space with parameters, treating both the state $\bm{x}$ and parameter $\bm{\alpha}$ as inputs and the action $\bm{u}$ as the output. This policy serves as the base policy for parameter-conditioned policy retrieval in Section \ref{sec:param_con_retrieval}, and is also used for data collection to train the probabilistic adaptation module discussed in Section \ref{sec: sys_id}. The resulting parameter-augmented bellman equation is
\begin{equation}
 \begin{aligned}
 	% Q(\bm{x}, \bm{u}) &= \mathbb{E}_{\pi} \left[ \sum_{t=0}^{\infty} \gamma^t R(\bm{x}_{t}, \bm{u}_{k}) \mid \bm{x}_{0}\!=\! \bm{x}, \bm{u}_{0}\!=\!\bm{u} \right]\!, \nonumber\\
      V(\bm{\alpha}, \bm{x}) &=  \mathbb{E}_{\pi}  \biggl[ \sum_{t=0}^{\infty} \gamma^t R(\bm{x}_{t}, \bm{u}_{t}) \mid \bm{x}_{0}\!=\! \bm{x}  \biggr],\\
      A(\bm{\alpha}, \bm{x}, \bm{u}) &= R(\bm{x}, \bm{u}) + \gamma (V(f(\bm{\alpha}, \bm{x}, \bm{u})) - V(\bm{\alpha}, \bm{x})), \\
      \pi(\bm{\alpha}, \bm{x}) &= \arg\max_{\bm{u} \in \Omega_{\bm{u}} } A(\bm{\alpha}, \bm{x}, \bm{u}).
    \label{eq: param_augment VA}
\end{aligned}
\end{equation}
where $V(\bm{\alpha}, \bm{x})$ is the parameter-augmented state-value function, and $A(\bm{\alpha}, \bm{x}, \bm{u})$ is the parameter-augmented advantage function. 

Solving \eqref{eq: param_augment VA} is typically a challenging ADP problem due to the curse of dimensionality. In this article, we tackle this challenge by building on our prior work, TTPI \citep{Shetty24ICLR}, which utilizes tensor factorization techniques for function approximation. The Value Iteration algorithm \citep{pashenkova1996value} is used to determine the optimal value function. At any iteration $k$, the $(k+1)$-th value function approximation is computed as
\begin{equation}
\label{eq:tt_value_iteration}
    \begin{aligned}
         V^{k+1} =& \text{TT-Cross}(\mc{B}^{\pi_k}V^k, \epsilon), \\
          \mc{B}^{\pi_k}V^k(\bm{\alpha}, \bm{x}) =& R(\bm{x},\pi^k(\bm{\alpha}, \bm{x})) + \gamma V^k(f(\bm{\alpha},\pi^k(\bm{\alpha}, \bm{x})))\\
         \pi^{k}(\bm{x}) =& \underset{\bm{u} \in \Omega_{\bm{u}}}{\mathrm{argmax}} A^k(\bm{\alpha}, \bm{x},\bm{u}), \\
         A^k(\bm{\alpha}, \bm{x}, \bm{u}) =& R(\bm{x}, \bm{u}) + \gamma (V^{k}(f(\bm{\alpha}, \bm{x}, \bm{u})) -V^k(\bm{\alpha}, \bm{x})), \\
    \end{aligned}
\end{equation}
where $\epsilon$ is the accuracy threshold of TT-cross approximation.

To compute $V^{k+1}$ in TT format, the function $\mc{B}^{\pi_k}V^k$ is queried iteratively using $\text{TT-Cross}(\mc{B}^{\pi_k}V^k, \epsilon)$, with batches of states (usually ranging from 1000 to 100,000 in practice). This requires computing the policy $\pi^k(\bm{\alpha}, \bm{x}) = \underset{\bm{u} \in \Omega_{\bm{u}}}{\argmax}\, A^{k}(\bm{\alpha}, \bm{x},\bm{u})$ numerous times across several iterations, making a fast computation of $ \underset{\bm{u}  \in \Omega_{\bm{u}}}{\mathrm{argmax}} A^k(\bm{\alpha}, \bm{x}, \bm{u})$ in batch form crucial.

To resolve the bottleneck, the advantage function $A^k$ is computed in TT format using TT-Cross. This is efficient as the calculation only requires evaluating $V^k$ and $R$, which are cheap to compute. This enables the use of TTGO \citep{shetty2016tensor}, an efficient optimization technique for a function in TT format. As a result, solutions for $\pi^k(\bm{\alpha}, \bm{x}) = \underset{\bm{u}  \in \Omega_{\bm{u}} }{\mathrm{argmax}}\, A^k(\bm{\alpha}, \bm{x}, \bm{u})$ over batches of states can be obtained quickly, leading to efficient policy retrieval and value iteration. The optimal value function and advantage function are obtained upon algorithm convergence, resulting in functions in TT format:
\begin{equation}
\begin{aligned}
	\label{eq:tt_V}
	V(\bm{\alpha}, \bm{x}) & \approx \bm{\mc{V}}(\alpha_{1:d}, \bm{x}_{1:m})\\ &= \bm{\mc{V}}^1_{:,i_{1},:} \cdots \bm{\mc{V}}^{d}_{:,i_{d},:} \; \bm{\mc{V}}^{d+1}_{:,i_{d+1},:}  \cdots \bm{\mc{V}}^{d+m}_{:,i_{d+m},:},
 \end{aligned}
\end{equation}
\begin{equation}
	\label{eq:tt_A}
        \begin{aligned}
	A(\bm{\alpha}, \bm{x},\bm{u}) &\approx \bm{\mc{A}}(\alpha_{1:d}, \bm{x}_{1:m}, \bm{u}_{1:l}) \\&= \bm{\mc{A}}^1_{:,i_{1},:} \cdots \bm{\mc{A}}^{d}_{:,i_{d},:} \; \bm{\mc{A}}^{d+1}_{:,i_{d+1},:}  \cdots \bm{\mc{A}}^{d+m}_{:,i_{d+m},:} \\ & \quad  \bm{\mc{A}}^{d+m+1}_{:,i_{d+m+1},:} \cdots  \bm{\mc{A}}^{d+m+l}_{:,i_{d+m+l},:}.
         \end{aligned}
\end{equation}

\subsection{Probabilistic System Adaptation}
\label{sec: sys_id}

The system parameters $\bm{\alpha}$ (also called privileged environment information) is typically not accessible during execution in the real world. However, we can probabilistically estimate it as $\hat{\bm{\alpha}}$, drawn from a probability distribution $P(\hat{\bm{\alpha}})$. This distribution can be inferred from the proprioceptive history, namely
\begin{equation}
    P(\hat{\bm{\alpha}}_t) = \phi (\bm{x}_{t-k:t-1}, \bm{u}_{t-k, t-1}).
    \label{eq:prob_adap}
\end{equation}

We name this process as probabilistic system adaptation. To learn the mapping between the proprioceptive history and $P(\hat{\bm{\alpha}})$, a dataset is required containing the state-action history $\bm{h} = \{ \bm{x}_{t-k:t-1}, \bm{u}_{t-k:t-1} \}$ and the corresponding system parameter $\bm{\alpha}_t$, which is accessible in simulation. Various approaches, such as energy-based models (EBMs) \cite{lecun2006tutorial} and diffusion models \cite{ho2020denoising}, can be employed to learn this distribution. In this article, the subsequent probabilistic policy retrieval (Section \ref{sec:param_con_retrieval}) reduces the strong reliance on precise parameter estimation. We use a straightforward multilayer perceptron (MLP) to approximate $\phi$, while noting that more sophisticated models could be explored in future extensions.

In practice, the loss function is defined as MSE $(\bm{\nu}_t, \bm{\alpha}_t) = \|\bm{\nu}_t - \bm{\alpha}_t\|^2$, where $\bm{\nu}_t$ is the output of MLP. We assume that $P(\hat{\bm{\alpha}}_t)$ follows a uniform distribution $\mathbb{U}(\bm{\nu}_t - \bm{w}/2, \bm{\nu}_t + \bm{w}/2)$, where $\bm{w}$ is a hyperparameter representing the bandwidth of $\mathbb{U}$. Note that our framework is flexible and can incorporate other distributions as well.

\subsection{Parameter-conditioned policy retrieval}
\label{sec:param_con_retrieval}
\begin{figure}[t]
	\centering
	\includegraphics[width=0.5\textwidth]{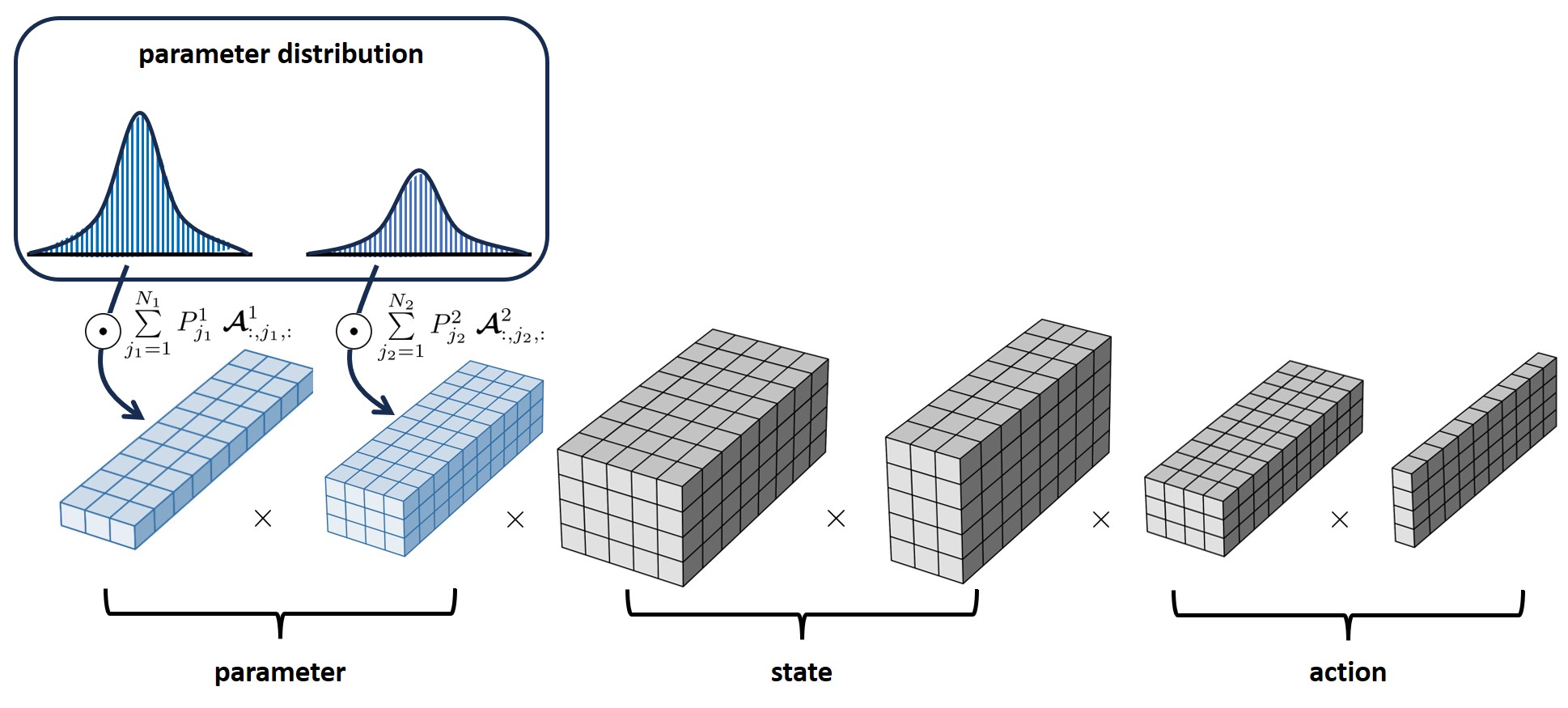}
	\caption{Domain contraction in TT format. \\The parameter-augmented advantage function in TT format typically includes separate 3rd-order cores for different dimensionality, such as parameter, state and action. Given a probabilistic parameter distribution, we can retrieve the parameter-conditioned policy by making product of parameter distributions and corresponding TT cores.}
	\label{fig:TT_dc}
\end{figure} 

\begin{figure}[t]
	\centering
	\includegraphics[width=0.42\textwidth]{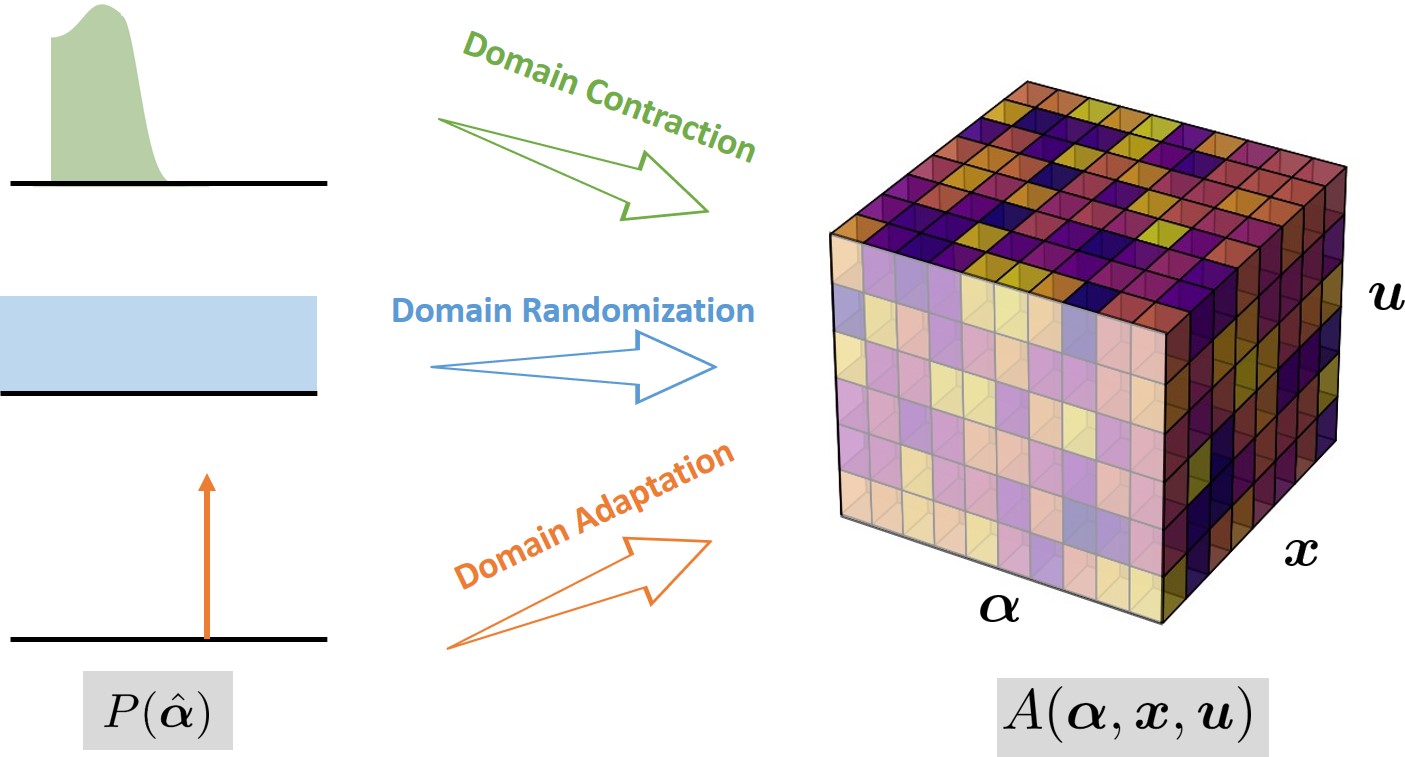}
	\caption{Domain contraction unifies domain randomization and domain adaptation by giving different parameter distributions.}
	\label{fig:dc_dr_da}
\end{figure} 

Without loss of generality, we assume the contact-rich manipulation task involves $d$ types of different parameters, such as mass, size and friction coefficient. The parameter space $\Omega_{{\alpha}}$ is therefore composed of $d$ subspaces, namely $\Omega_{{\alpha}} = \Omega_{{\alpha}_1} \times \cdots \times \Omega_{{\alpha}_d}$. We assume each subspace is discretized by $N_i$ points. Let $\bm{\alpha}_j  = (\alpha_{j_1}, \cdots, \alpha_{j_d})$ represents one instance of domain parameter, at the discretization index $j$ across all $d$ dimensions. Similarly, the parameter distribution $P(\hat{\bm{\alpha}})$ is defined within $\Omega_P$, also composed of $d$ subspaces, namely $\Omega_{{P}} = \Omega_{P_1} \times \cdots \times \Omega_{P_d}$, and each subspace is discretized by $N_i$ points. Let ${P}_j = P_1(\hat{\alpha}_{j_1})  P_2(\hat{\alpha}_{j_2}) \cdots P_d(\hat{\alpha}_{j_d})$ represents the probability of the estimated value $\hat{\bm{\alpha}}_j$, where $P_i$ is the probability distribution at dimension $i$, $i \in \{1, \cdots, d\}$.

Given the estimated parameter distribution $P(\hat{\bm{\alpha}})$, obtaining the corresponding parameter-conditioned policy $\pi(\bm{x} |P(\hat{\bm{\alpha}}))$ is not free. It is neither simply averaging the parameter estimates $\sum_{j=1}^N P_j \hat{\bm{\alpha}}_j$ and directly feeding it into $\pi(\bm{\alpha}, \bm{x})$, nor merely taking the weighted sum of parameter-specific policies $\sum_{j=1}^N P_j \pi (\hat{\bm{\alpha}}_j, \bm{x})$. Instead, it involves finding the $\argmax$ of the weighted sum of parameter-specific advantage functions, namely
\begin{equation}
        \label{eq:full_adv}
	\begin{aligned}
		A(\bm{x},\bm{u} | P(\hat{\bm{\alpha}})) = \sum_{j_1=1}^{N_1} \cdots \sum_{j_d=1}^{N_d}  P_j  A_{\hat{\bm{\alpha}}_j} (\bm{x}, \bm{u}).
	\end{aligned}
\end{equation}

\noindent We call this process as domain contraction in our previous work \citep{Xue24CORL}, as illustrated in Fig.~\ref{fig:TT_dc}. Intuitively, domain contraction randomizes over a contracted domain based on instance-specific parameter distribution, instead of randomizing over the full domain by blindly ignoring all instance-specific information (as in domain randomization).
We further provide the theoretical proof below:
% More details and the proof is provided in Sec. \ref{proof: proof1}.

\begin{theorem}
Given the estimated parameter distribution $P(\hat{\bm{\alpha}})$, the parameter-conditioned policy can be retrieved from the weighted sum of parameter-specific advantage functions.
\end{theorem}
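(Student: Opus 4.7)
The plan is to derive the parameter-conditioned advantage function directly from the stochastic objective in \eqref{eq: MDP} and show that the weighted sum in \eqref{eq:full_adv} is precisely the quantity whose $\argmax$ yields the optimal parameter-conditioned policy. First I would fix a state $\bm{x}$ and express the parameter-conditioned state-value function as $V(\bm{x}\mid P(\hat{\bm{\alpha}})) = \mathbb{E}_{\hat{\bm{\alpha}} \sim P}\!\left[ V(\hat{\bm{\alpha}}, \bm{x}) \right]$ by applying linearity of expectation to the outer expectation in \eqref{eq: MDP}, where the inner $V(\hat{\bm{\alpha}},\bm{x})$ is the parameter-augmented value function already defined in \eqref{eq: param_augment VA}. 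Discretizing $\Omega_{\bm{\alpha}}$ consistently with the indexing $\bm{\alpha}_j = (\alpha_{j_1},\ldots,\alpha_{j_d})$ and the product probability $P_j = P_1(\hat{\alpha}_{j_1})\cdots P_d(\hat{\alpha}_{j_d})$, this expectation becomes the weighted sum $V(\bm{x}\mid P(\hat{\bm{\alpha}})) = \sum_{j_1,\ldots,j_d} P_j\, V_{\hat{\bm{\alpha}}_j}(\bm{x})$.

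Next I would write the parameter-conditioned Bellman equation for a candidate action $\bm{u}$ by conditioning on $\hat{\bm{\alpha}}$ before taking expectation, giving
\begin{equation*}
Q(\bm{x},\bm{u}\mid P(\hat{\bm{\alpha}})) = \mathbb{E}_{\hat{\bm{\alpha}}\sim P}\!\left[ R(\bm{x},\bm{u}) + \gamma V(f(\hat{\bm{\alpha}},\bm{x},\bm{u}))\right].
\end{equation*}
Since $R(\bm{x},\bm{u})$ is independent of $\hat{\bm{\alpha}}$ and expectation is linear, this equals $\sum_{j_1,\ldots,j_d} P_j\, Q_{\hat{\bm{\alpha}}_j}(\bm{x},\bm{u})$. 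Subtracting the (action-independent) $V(\bm{x}\mid P(\hat{\bm{\alpha}}))$ from both sides, and using the definition $A_{\hat{\bm{\alpha}}_j}(\bm{x},\bm{u}) = Q_{\hat{\bm{\alpha}}_j}(\bm{x},\bm{u}) - V_{\hat{\bm{\alpha}}_j}(\bm{x})$ from \eqref{eq: param_augment VA}, I recover exactly \eqref{eq:full_adv}.

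Finally I would conclude by noting that the optimal parameter-conditioned policy maximizes the expected cumulative reward \eqref{eq: MDP}, which by the Bellman optimality argument is attained pointwise in $\bm{x}$ by $\pi(\bm{x}\mid P(\hat{\bm{\alpha}})) = \argmax_{\bm{u}\in\Omega_{\bm{u}}} Q(\bm{x},\bm{u}\mid P(\hat{\bm{\alpha}}))$; because $V(\bm{x}\mid P(\hat{\bm{\alpha}}))$ does not depend on $\bm{u}$, this $\argmax$ is unchanged if $Q$ is replaced by $A(\bm{x},\bm{u}\mid P(\hat{\bm{\alpha}}))$, which is the claimed weighted sum.

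The main obstacle I expect is justifying the interchange between the outer expectation over $\hat{\bm{\alpha}}$ and the $\max$ in the policy optimization, since in general $\max \mathbb{E} \neq \mathbb{E} \max$. I would resolve this by clarifying that we are not comparing the stochastic optimum to the pointwise optima, but rather characterizing the optimal \emph{feedback} policy for the stochastic MDP \eqref{eq: MDP}: under the standing assumption that the policy has access only to $\bm{x}$ (and not to a hidden realization of $\hat{\bm{\alpha}}$), the Bellman operator acts on the parameter-averaged value function, and linearity of expectation applies cleanly at every stage; the commonly misleading $\max/\mathbb{E}$ swap would only arise if each rollout could observe its own $\hat{\bm{\alpha}}$, which is excluded by the problem setup.
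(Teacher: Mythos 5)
Your proposal is correct and follows essentially the same route as the paper: both arguments rest on linearity of expectation over the discretized parameter distribution (with $\sum_j P_j = 1$ absorbing the parameter-independent reward term), identify the parameter-conditioned value and advantage as weighted sums of their parameter-specific counterparts, and then retrieve the policy via $\argmax$ while noting that the maximization cannot be pushed inside the sum. Your packaging through $Q$-functions and the closing remark on the $\max/\mathbb{E}$ interchange are minor presentational additions rather than a different argument.
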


\begin{proof}
\label{proof: proof1}
Obtaining the parameter-conditioned policy involves optimizing the parameter-augmented function
\begin{equation}
	\label{eq:full_para_advantage}
	\begin{aligned}
		&A(\bm{x},\bm{u} | P(\hat{\bm{\alpha}})) % = \mathbb{E}_{\hat{\bm{\alpha}} \sim P(\hat{\bm{\alpha}})} A(\bm{x}, \bm{u} | \hat{\bm{\alpha}} )  
        \\
        = &R(\bm{x},\bm{u}) + \gamma \Big(V \big(f( \bm{x},\bm{u} | P(\hat{\bm{\alpha}})) \big) - V (\bm{x}|P(\hat{\bm{\alpha}}))\Big), \\ 
        \forall &{(P, \bm{x},\bm{u})} \in \Omega_{P} \times \Omega_{\bm{x}} \times \Omega_{\bm{u}},\\
	\end{aligned}
\end{equation}
with
\begin{equation}
\begin{aligned}
 \centering
        &  V (\bm{x}|P(\hat{\bm{\alpha}}))         % = \mathbb{E}_{\hat{\bm{\alpha}} \sim P(\hat{\bm{\alpha}})}  V (\bm{x}|\hat{\bm{\alpha}}) 
         = \sum_{j_1=1}^{N_1} \cdots \sum_{j_d=1}^{N_d}  P_{(j_1, \cdots, j_d)} \; V (\bm{x}|\hat{\alpha}_{j_1}, \cdots, \hat{\alpha}_{j_d}), \\
        &\sum_{j_1=1}^{N_1} \cdots \sum_{j_d=1}^{N_d} P_{(j_1, \cdots, j_d)} = 1.
	\end{aligned}
        \label{eq: prob_v}
\end{equation}
For simplicity, we use $\hat{\bm{\alpha}}_j$ to represent $(\hat{\alpha}_{j_1}, \cdots, \hat{\alpha}_{j_d})$ and $P_j$ to represent $P_(j_1, \cdots, j_d)$.
Given that each subspace $\Omega_{{\alpha}_i}$ of the parameter space $\Omega_{{\alpha}}$ is discretized by $N_i$ points, the parameter-conditioned advantage function can be written as 
% \begin{footnotesize}
\begin{equation}
	\label{eq:long_full_adv}
	\begin{aligned}
		&A(\bm{x},\bm{u} | P(\hat{\bm{\alpha}})) \\
        = &R(\bm{x},\bm{u}) + \sum_{j_1=1}^{N_1} \cdots \sum_{j_d=1}^{N_d} P_j \; \gamma \big( V \big(f(\bm{x},\bm{u}|\hat{\bm{\alpha}}_{j})\big) - V (\bm{x}|\hat{\bm{\alpha}}_{j})\big).
									% &=  \sum_{j=0}^N P_j R(\bm{x},\bm{u}|\bm{\alpha}_j) + \sum_{j=0}^N P_j  V \big(f(\bm{x},\bm{u}|\bm{\alpha}_j)\big) + \sum_{j=0}^N P_j  V (\bm{x}|\bm{\alpha}_j), \\
									% &=\sum_{j_1=1}^{N_1} \cdots \sum_{j_d=1}^{N_d}  P_j \Big( R(\bm{x},\bm{u}) + \gamma  \big( V \big(f(\bm{x},\bm{u}|\bm{\alpha}_{j}) \big) - V (\bm{x}|\bm{\alpha}_{j}) \big) \Big),
	\end{aligned}
\end{equation}

The right side of \eqref{eq:long_full_adv} can be further rewritten as

\begin{equation}
\label{eq:long_full_adv_2}
    \sum_{j_1=1}^{N_1} \cdots \sum_{j_d=1}^{N_d}  P_j \Big( R(\bm{x},\bm{u}) + \gamma  \big( V \big(f(\bm{x},\bm{u}|\hat{\bm{\alpha}}_{j}) \big) - V (\bm{x}|\hat{\bm{\alpha}}_{j}) \big) \Big),
\end{equation}
and the parameter-specific advantage function is defined as
\begin{equation}
	\label{eq:specific_adv}
	\begin{aligned}
		A_{\hat{\bm{\alpha}}_j} (\bm{x}, \bm{u}) = R(\bm{x},\bm{u}) +  \gamma \big ( V \big(f(\bm{x},\bm{u}|\hat{\bm{\alpha}}_j)\big) - V (\bm{x}|\hat{\bm{\alpha}}_j) \big ).
	\end{aligned}
\end{equation}

From \eqref{eq:long_full_adv}, \eqref{eq:long_full_adv_2}, and \eqref{eq:specific_adv}, we can derive that the parameter-conditioned advantage function is the weighted sum of parameter-specific advantage functions, as shown in \eqref{eq:full_adv},
% \begin{equation}
%         \label{eq:full_adv}
% 	\begin{aligned}
% 		A(\bm{x},\bm{u} | P(\hat{\bm{\alpha}})) = \sum_{j_1=1}^{N_1} \cdots \sum_{j_d=1}^{N_d}  P_j  A_{\hat{\bm{\alpha}}_j} (\bm{x}, \bm{u}),
% 	\end{aligned}
% \end{equation}
and the parameter-conditioned primitive policy can then be computed by
\begin{equation}
	\label{eq:robu_policy}
	\pi(\bm{x} |P(\hat{\bm{\alpha}})) = \arg\max_{\bm{u} \in \Omega_{\bm{u}}} A(\bm{x}, \bm{u}|P(\hat{\bm{\alpha}})).
\end{equation}

Note that the parameter-conditioned policy cannot be directly computed as the weighted sum of parameter-specific policies, since the $\arg\max$ operation does not have an associative property with respect to addition. 
\end{proof}

Computing \eqref{eq:full_adv} and then retrieve the parameter-conditioned policy can be computationally expensive due to the combinatorial complexity and $\argmax$ over an arbitray function. We address this issue by leveraging the separable structure of TT format for efficient algebraic operation, and its advantage of finding optimal solutions for functions in TT format. In Sec. \ref{sec: param_aug_pl}, we have approximated the parameter-augmented advantage functions in TT format. We define the tensor cores related to $\bm{x}$ and $\bm{u}$ in \eqref{eq:tt_A} as 
\begin{equation}
	\label{eq:tt_A_xu}
	\bm{\mc{A}}(\bm{x}_{1:m}, \bm{u}_{1:l}) = \bm{\mc{A}}^{d+1}_{:,i_{d+1},:}  \cdots \bm{\mc{A}}^{d+m}_{:,i_{d+m},:} \cdots  \bm{\mc{A}}^{d+m+l}_{:,i_{d+m+l},:}, 
\end{equation}
thus the parameter-specific advantage functions can be extracted as 
\begin{equation}
	\label{eq:tt_A_alpha}
        \begin{aligned}
	A_{\hat{\bm{\alpha}}_j} (\bm{x}, \bm{u}) &\approx \bm{\mc{A}}(\bm{x}_{1:m}, \bm{u}_{1:l}|\hat{\bm{\alpha}}_{j}) \\ &= \bm{\mc{A}}^1_{:,j_{1},:} \cdots \bm{\mc{A}}^{d}_{:,j_{d},:} \; \bm{\mc{A}}(\bm{x}_{1:m}, \bm{u}_{1:l}), 
        \end{aligned}
\end{equation}

The parameter probability $P_j$ for parameter instance $\hat{\bm{\alpha}}_j$ is given by
\begin{equation}
    P_{(j_{1},\ldots,j_{d})} = P_1(\hat{\alpha}_{j_1})  P_2(\hat{\alpha}_{j_2}) \cdots P_d(\hat{\alpha}_{j_d}).
\end{equation}

For simplicity, we denote $P_d(\hat{\alpha}_{j_d})$ as $P^d_{j_d}$. The parameter-conditioned advantage function in \eqref{eq:full_adv} can then be efficiently computed as the weighted sum of parameter-specific advantage functions for each dimension, utilizing the separable structure of the TT model, namely

\begin{equation}
	\begin{aligned}
		& A(\bm{x},\bm{u}|P(\hat{\bm{\alpha}})) \approx  \sum_{j_1=1}^{N_1} \cdots \sum_{j_d=1}^{N_d} P_{(j_{1},\ldots,j_{d})} \bm{\mc{A}}(\bm{x}_{1:m}, \bm{u}_{1:l}|\hat{\bm{\alpha}}_{j})\\
			&= \sum_{j_1=1}^{N_1}  P^1_{j_1} \; \bm{\mc{A}}^{1}_{:,j_{1},:} \cdots \sum_{j_d=1}^{N_d} P^d_{j_d} \; \bm{\mc{A}}^{d}_{:,j_{d},:} \; \bm{\mc{A}}(\bm{x}_{1:m}, \bm{u}_{1:l}).
	\end{aligned}
\end{equation}

The computation of weighted sum is in tensor core level, which is much more efficient than in the function level. After obtaining the parameter-conditioned advantage function, we can retrieve the parameter-conditioned policy using TTGO \citep{shetty2016tensor}, a specialized method for efficiently finding globally optimal solutions given functions in TT format.

To retrieve such parameter-conditioned policies, knowing the parameter distribution $P(\hat{\bm{\alpha}})$ is crucial. Domain randomization (DR) and domain adaptation (DA) rely on assumed distributions during training, while domain contraction (DC) offers a better way to leverage domain knowledge during execution, enabling optimal behaviors while preserving generalization capability. We further demonstrate that DR and DA are two special cases of DC, as shown in Figure \ref{fig:dc_dr_da} and detailed in \citep{Xue24CORL}.

\begin{figure*}[htbp]
    \centering

    \vspace{1em} % Add vertical space between rows

    % Second row of images
    \begin{minipage}{0.325\linewidth}
        \subfloat[\centering Ground truth]{\includegraphics[width=0.95\linewidth]{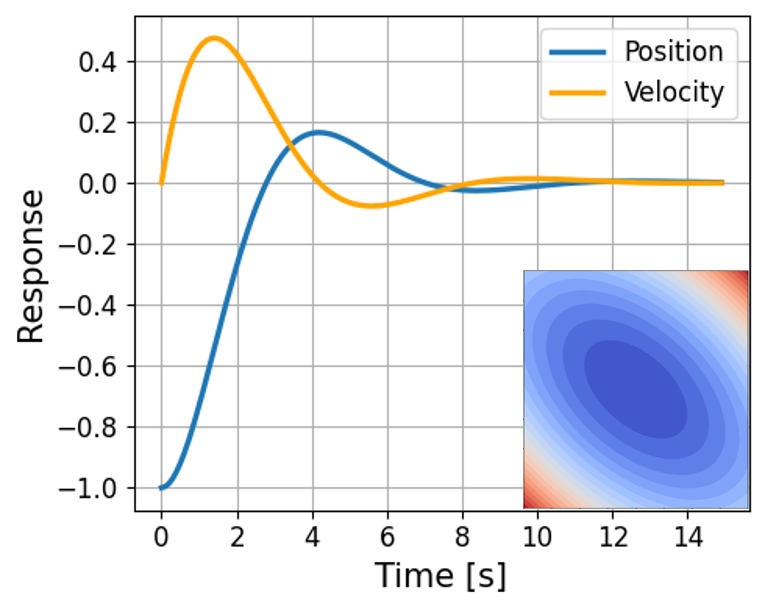}\label{fig:real_v}}
    \end{minipage}
    \hfill
    \begin{minipage}{0.325\linewidth}
        \subfloat[\centering EMA result]{\includegraphics[width=0.95\linewidth]{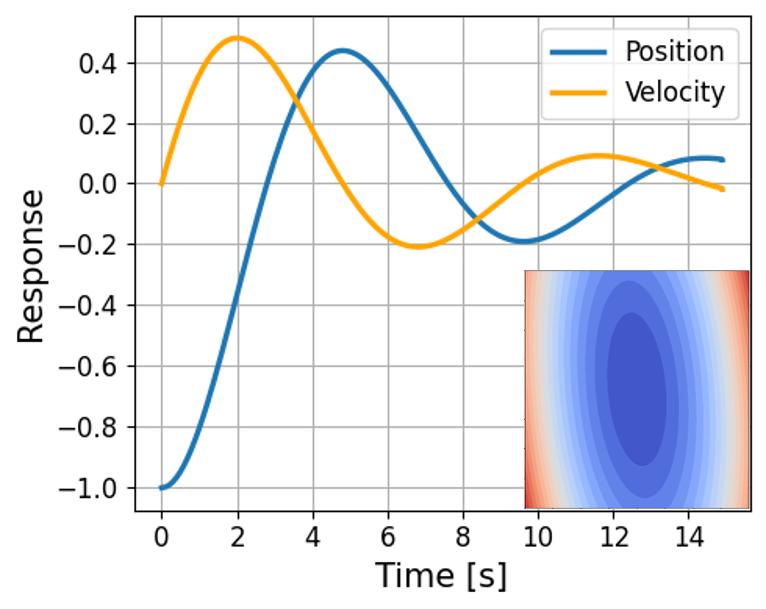}\label{fig:ema_v}}
    \end{minipage}
    \hfill
    \begin{minipage}{0.325\linewidth}
        \subfloat[\centering IMA result]{\includegraphics[width=0.95\linewidth]{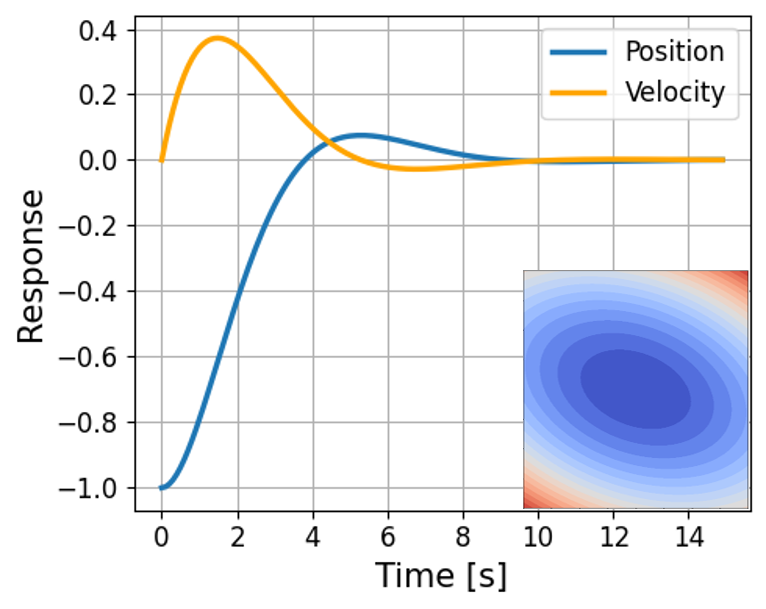}\label{fig:ima_v}}
    \end{minipage}
    \caption{Position and velocity responses of a spring-damper system, along with the approximated value functions (figure insets as colormaps), for the policies derived from the ground truth, EMA, and IMA.}
    \label{fig: spring_damper}
\end{figure*}

\subsection{Theoretical analysis of IMA compared with EMA}

EMA is a commonly used approach for robust policy learning. Given that precise system identification is intractable in real-world scenarios, we present a theoretical proof demonstrating that the control policy obtained by IMA is theoretically superior to that of EMA for practical applications.

\begin{theorem}
    Given an estimate of domain parameter $\bm{\alpha}$, the parameter-conditioned policy derived through implicit motor adaptation is more optimal than that obtained through \textit{explicit motor adaptation}.
\end{theorem}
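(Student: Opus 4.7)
The plan is to formalize the notion of ``more optimal'' in terms of the probabilistic objective of equation~\eqref{eq: MDP}, and then show that IMA, by construction, solves this objective over $\bm{u}$ directly, whereas EMA solves only a point-estimate surrogate. First I would fix notation: let $P(\hat{\bm{\alpha}})$ denote the estimated parameter distribution from the system adaptation module of Section~\ref{sec: sys_id}, and define the local Bayes objective $J(\bm{x},\bm{u}) := \mathbb{E}_{\hat{\bm{\alpha}}\sim P}[A(\hat{\bm{\alpha}}, \bm{x}, \bm{u})]$, which, after discretization, coincides with the weighted sum $A(\bm{x}, \bm{u}\mid P(\hat{\bm{\alpha}}))$ of Proposition~1. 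By equation~\eqref{eq:robu_policy}, the IMA policy is $\pi_{\text{IMA}}(\bm{x}\mid P) = \argmax_{\bm{u}\in\Omega_{\bm{u}}} J(\bm{x},\bm{u})$. EMA, by contrast, collapses $P$ into a deterministic point estimate $\bar{\bm{\alpha}}:=g(P)$ (e.g.\ the mean $\mathbb{E}[\hat{\bm{\alpha}}]$ produced by the MLP of Section~\ref{sec: sys_id}) and returns $\pi_{\text{EMA}}(\bm{x}\mid P) = \argmax_{\bm{u}\in\Omega_{\bm{u}}} A(\bar{\bm{\alpha}}, \bm{x}, \bm{u})$.

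The core of the argument is then a one-line feasibility observation: $\pi_{\text{EMA}}(\bm{x}\mid P)$ is a single element of $\Omega_{\bm{u}}$, hence a feasible candidate for the IMA maximization, so
\begin{equation*}
J(\bm{x}, \pi_{\text{IMA}}(\bm{x}\mid P)) \;=\; \max_{\bm{u}\in\Omega_{\bm{u}}} J(\bm{x},\bm{u}) \;\geq\; J(\bm{x}, \pi_{\text{EMA}}(\bm{x}\mid P)).
\end{equation*}
Since $J(\bm{x},\bm{u})$ is precisely the one-step contribution to the probabilistic cumulative-reward objective of~\eqref{eq: MDP}, this inequality is then lifted to the full value function by a standard policy-improvement argument: acting greedily with respect to the expected advantage at every state yields a policy whose parameter-averaged value dominates that of any competing policy pointwise, giving $\mathbb{E}_{\hat{\bm{\alpha}}\sim P}[V^{\pi_{\text{IMA}}}(\bm{x})] \geq \mathbb{E}_{\hat{\bm{\alpha}}\sim P}[V^{\pi_{\text{EMA}}}(\bm{x})]$ for all $\bm{x}$, which is the precise sense in which IMA is ``more optimal'' than EMA.

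The main obstacle is establishing when the inequality is \emph{strict}, since the feasibility argument alone leaves room for $\pi_{\text{EMA}}=\pi_{\text{IMA}}$ in degenerate cases. Strictness is equivalent to $\argmax_{\bm{u}} A(\bar{\bm{\alpha}}, \bm{x}, \bm{u}) \neq \argmax_{\bm{u}} \mathbb{E}_{\hat{\bm{\alpha}}}[A(\hat{\bm{\alpha}}, \bm{x}, \bm{u})]$, which is a Jensen-type gap governed by the curvature of $A$ in $\bm{\alpha}$. I would handle this by noting that $V(\bm{\alpha},\bm{x})$ is generically nonlinear in $\bm{\alpha}$ (a consequence of the nonlinear dynamics $f$ propagating through the Bellman operator in~\eqref{eq: param_augment VA}), so the equality case is non-generic; a concrete illustration using the spring--damper system already depicted in Fig.~\ref{fig: spring_damper} would substantiate that the strict gap is realized in practice. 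A secondary subtlety is that EMA's point-estimator $g$ is somewhat arbitrary, but the feasibility argument above holds uniformly over any deterministic $g$, so the result requires no additional assumption on how EMA collapses $P$ into $\bar{\bm{\alpha}}$.
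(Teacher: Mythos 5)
Your proposal follows essentially the same route as the paper's proof: both formalize ``more optimal'' via the parameter-distribution-weighted objective $J_\pi(\bm{x},P(\hat{\bm{\alpha}}))$, observe that IMA maximizes the weighted (expected) advantage/value directly while EMA maximizes only the value at a single point estimate, and conclude $J_{\pi_{\text{IMA}}} \geq J_{\pi_{\text{EMA}}}$ because EMA's policy is merely one feasible candidate under the criterion IMA optimizes. Your version is in fact somewhat more explicit than the paper's (the stated feasibility inequality at the advantage level, the lifting step, and the discussion of when the inequality is strict), but the underlying argument is the same.
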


\begin{proof}
\label{proof: proof3}

Knowing the exact parameter $\bm{\alpha}$ for the real-world domain is intractable, but we can represent it probabilistically. Let ${P}(\hat{\bm{\alpha}})$ denote the probabilistic estimation obtained using any off-the-shelf approach. Therefore, the objective of robust policy learning is to find a policy $\pi$ that maximizes
\begin{equation}
\begin{aligned}
    &J_\pi (\bm{x}, {P}(\hat{\bm{\alpha}})) = \mathbb{E}_{\hat{\bm{\alpha}} \sim {P}(\hat{\bm{\alpha}})} \Bigg[   \mathbb{E}_{\pi}  \Big[ \sum_{t=0}^{\infty} \gamma^t R(\bm{x}_{t}, \bm{u}_{t}) \Big| \bm{x}_{0}\!=\! \bm{x}  \Big] \Bigg], \\
     &\text{s.t.} \quad \bm{x}_{t+1} =  f(\bm{x}_t, \bm{u}_t, \hat{\bm{\alpha}}). 
     \label{eq: ema_obj}
\end{aligned}
\end{equation}

The key difference between IMA and EMA lies in the method used to retrieve the parameter-conditioned policy. EMA determines the domain parameter $\bm{\alpha}$ with a single estimate, defined as $\hat{\bm{\alpha}} = f_\theta (\bm{h})$. The obtained $\hat{\bm{\alpha}}$ is then used as input to the parameter-augmented base policy. The resulting parameter-conditioned policy $\pi_e$ aims to maximize
\begin{equation}
   V(\bm{x}| \hat{\bm{\alpha}}) = V  (\hat{\alpha}_{j_1}, \cdots ,\hat{\alpha}_{j_d}, \bm{x}).
\end{equation}

IMA estimates the true parameter $\bm{\alpha}$ using a probability distribution $\hat{\bm{\alpha}} \sim {P}(\hat{\bm{\alpha}})$, and then retrieves the parameter-conditioned policy $\pi_i$ through domain contraction, with the objective of maximizing
\begin{equation}
   V(\bm{x}|P(\hat{\bm{\alpha}})) = \sum_{j_1=1}^{N_1} \cdots \sum_{j_d=1}^{N_d} P_{(j_1, \cdots, j_d)} V  (\hat{\alpha}_{j_1}, \cdots ,\hat{\alpha}_{j_d}, \bm{x}),
\end{equation}
where $\hat{\bm{\alpha}} \sim {P}(\hat{\bm{\alpha}})$.

Given the same dataset used in EMA and IMA adaptation module training, $\hat{\bm{\alpha}}$ can be considered a single sample from ${P}(\hat{\bm{\alpha}})$. Therefore, we have
\begin{equation}
    J_{\pi_{i}} (\bm{x}, P(\hat{\bm{\alpha}})) \geq  J_{\pi_{e}} (\bm{x}, P(\hat{\bm{\alpha}})).
\end{equation}
This indicates that the policy obtained by EMA typically demonstrates poorer performance compared to that obtained by IMA.
\end{proof}
 
\begin{example}
  Spring-damper System
\end{example}

To numerically illustrate the difference, we provide a spring-damper system with parameters $\bm{\alpha} = (m, k, b)$ as a toy example, where $m$ represents the mass, $k$ the spring stiffness, and $b$ the damping coefficient. The system is described by the following state-space equations
\[
\dot{\bm{x}}_t = \mathbf{A} \bm{x}_t + \mathbf{B} u_t,
\]
where \(\bm{x}_t\) includes the displacement and velocity, and \(u_t\) is the control input. The system matrices are expressed as
\[
\mathbf{A} = \begin{bmatrix} 0 & 1 \\ -\frac{k}{m} & -\frac{b}{m} \end{bmatrix}, \quad
\mathbf{B} = \begin{bmatrix} 0 \\ \frac{1}{m} \end{bmatrix}.
\]

The optimal control law and value function can be obtained by solving the Algebraic Riccati Equation (ARE) \citep{lancaster1995algebraic}, and they are strongly influenced by the parameter values.

We assume that the real system parameters $\bm{\alpha} = [m, k, b]$ are unknown but can be estimated as a probability distribution $\hat{\bm{\alpha}} \sim P(\hat{\bm{\alpha}})$. Fig. \ref{fig: spring_damper} compares the results of EMA and IMA with the ground-truth optimal policy. The value function obtained by IMA closely aligns with the ground truth, effectively stabilizing the mass at the desired equilibrium position. In contrast, the EMA policy shows significantly poorer performance. This toy example highlights the effectiveness of probabilistic system adaptation and implicit policy retrieval in a dynamical system with unknown parameters.

%===============================================================================

\section{Experimental Results}
\label{sec:result}

We validate the effectiveness of the proposed method on three contact-rich manipulation tasks: \texttt{Hit}, \texttt{Push}, and \texttt{Reorientation}, as shown in Fig.~\ref{fig:setup}. All the primitives are highly dependent on the physical parameters between the object, the robot, and the surroundings. 

\subsection{Base policy learning}
\label{sec:base_policy}

We first learn the parameter-augmented base policy over a range of system parameters using tensor approximation. Since the physical dynamics of \texttt{Hit} are fully known, the control policy can be derived analytically. For \texttt{Push} and \texttt{Reorientation}, TTPI \citep{Shetty24ICLR} is used to compute the parameter-augmented value functions and advantage functions.

\textbf{Hit}: Hitting involves manipulating objects through impact, which is a representative one-shot manipulation task. This means the object can only be hit once, with no additional actions allowed. Determining an appropriate impact is therefore crucial and varies significantly depending on instance-specific domain parameters. In this work, we focus on the planar hitting primitive. The state is the object position, denoted as $\bm{x} = [x, y]$, and the control input is the applied impact $\bm{u} = [I_x, I_y]$. The physical parameters $\bm{\alpha}$ include the object mass $m$ and the friction coefficient $\mu$. 
% The motion equation is
% \begin{equation}
% 	\begin{aligned}
% 		&\bm{x}^\text{des} = \bm{x}_0 + \frac{\bm{u}}{m} t - \frac{1}{2} \frac{\bm{u}}{\lVert \bm{u} \rVert} \mu g t^2. \\
% 	\end{aligned}
% 	\label{eq: hit_eq}
% \end{equation}

Given the mass $m$, the friction coefficient $\mu$, the initial state $\bm{x}_0$, and the target $\bm{x}^{\text{des}}$, we define the following single-step reward:
\begin{equation}
\begin{aligned}
	r (\bm{\alpha}, \bm{x}, \bm{u}) 
	&= -\bigl(\|\bm{x} - \bm{x}^{\text{des}}\|^2 \;+\; 0.01\,\|\bm{u}\|^2\bigr),
\end{aligned}
\label{eq:hit_sol}
\end{equation}
where 
\[
\bm{x} 
= \bm{x}_0 
+ \frac{\bm{u}}{m}\, t
- 0.5 \,\frac{\bm{u}}{\|\bm{u}\|}\,\mu\,g \, t^2.
\]
Since the hitting task terminates after a single action, this immediate reward function $r$ effectively serves the same role that an ``advantage function'' would in a multi-step setting, differing only by a constant. We use TT-cross to approximate $r(\bm{\alpha}, \bm{x}, \bm{u})$ across diverse parameter instances in TT format.

% Given the mass $m$, the friction coefficient $\mu$, the initial state $\bm{x}_0$ and the target $\bm{x}^{\text{des}}$, the advantage function is 
% \begin{equation}
% 	\begin{aligned}
% 		A (\bm{\alpha}, \bm{x}, \bm{u}) = -\big(\lVert \bm{x} - \bm{x}^\text{des} \rVert ^2 + 0.01 \lVert \bm{u} \rVert ^2\big), \\
% 	\end{aligned}
% 	\label{eq: hit_sol}
% \end{equation}
% where $\bm{x} = \bm{x}_0 + \frac{\bm{u}}{m} t - 0.5 \frac{\bm{u}}{\lVert \bm{u} \rVert} \mu g t^2$. We applied TT-cross to approximate the advantage function $A(\bm{\alpha}, \bm{x}, \bm{u})$ across diverse parameter instances in TT format.

\textbf{Push}: Pushing is widely recognized as a challenging task in robot planning and control, primarily due to its hybrid and under-actuated nature \citep{mason1986mechanics, lynch1996stable, mason1999progress}. In this work, we further complicate the problem by considering diverse parameters and aiming to find the instance-aware optimal policy for each. The state of the planar pushing task is characterized by $[s_x, s_y, s_{\theta}, \psi, \phi]$, and the action is represented as $[f_x, f_y, \dot{\psi}, \dot{\phi}]$. Here, $[s_x, s_y, s_{\theta}] \in SE(2)$ denotes the position and orientation of the object in the world frame. $\psi$ is the relative angle of the contact point in the object frame, and $\phi$ represents the distance between the contact point and the object surface. The forces exerted on the object are denoted by $\bm{f} = [f_x, f_y]^\trsp$, while $\bm{v}_p = [\dot{\psi}, \dot{\phi}]^\trsp$ represents the angular and translational velocities of the robot's end-effector. The physical parameters include the object mass $m$, radius $r$, and the friction coefficient $\mu$ between the object and the table.

The robot dynamics is defined based on the quasi-static approximation and the limit surface, resulting in a similar expression as \citep{hogan2020reactive}. To learn the parameter-augmented advantage function, the reward function is defined as
\begin{equation}
	\begin{aligned}
		& r =  -(\rho c_{p} + c_o + 0.01 c_f + 0.01 c_v), \\
	\end{aligned}
\end{equation}
with 
\begin{equation}
	\begin{aligned}
		 c_p &= \lVert \bm{x}_p - \bm{x}_p^\text{des} \rVert/l_p ,\quad c_o = \lVert x_o - x_o^\text{des} \rVert/l_o ,\quad \\
		 c_f &= \lVert \bm{f} \rVert,\quad c_v = \lVert \bm{v}_p \rVert,
	\end{aligned}
\end{equation}
where $\bm{x}_p = [s_x, s_y]$ and $x_o = \theta$ denote the object’s position and orientation. Without loss of generality, we set $\bm{x}^\text{des} = \bm{0}$ as the target configuration. $\bm{f}$ is the control force, while $\bm{v}_p$ is the velocity of the robot's end-effector. $l_p$ and $l_o$ are set to $0.005$ and $0.01 \pi$, respectively.
% Without loss of generality, we assume the learned policy is regulated, namely, $\bm{x}_p^{\text{des}} = \bm{0}$ and $x_o^{\text{des}} = 0$.

\textbf{Reorientation}: In this task, we aim to enable the robot to reorient an object using parallel fingers. The state is the orientation angle $\theta$, and the control input is the normal force $f_n$ between the gripper and object. The physical parameters are the object mass $m$, length $l$ and torsional friction coefficient $\mu$. The gravitational torque and normal force $f_n$ are used as braking mechanisms to slow down the object motion. We build the dynamics model of the reorientation primitive based on \citep{vina2016adaptive} as 
\begin{equation}
\begin{aligned}
    I \ddot{\theta} &= \tau_g + 2\tau_f,\\
    \dot{\theta} &= \dot{\theta}_0 - \ddot{\theta} \Delta t,
\end{aligned}
\end{equation}
where $\tau_f = \mu_t f_n^{1+\xi}$ is the torsional sliding friction between robot gripper and the object. In this work, we set $\xi = 0$. $\mu_t$ is the torsional friction coefficient, which is related to the materials and normal force distribution. $\tau_g = mgl \text{sin}(\theta)$ is the gravity torque. We therefore include $\mu_t$, object mass $m$ and length $l$ as the model parameters. The task is to rotate any object from its initial configuration to a vertically upward configuration. To achieve this, the object is given an initial angular velocity $\dot{\theta}_0$ by swinging the robot arm.

The reward function for \texttt{Reorientation} primitive learning is defined as
\begin{equation}
	\begin{aligned}
		& r =  -(\beta c_{g} + c_f), \\
	\end{aligned}
\end{equation}
with $c_g = \lVert {x}_o - {x}_o^\text{des} \rVert$, $c_f = \lVert f_n \rVert$. $x_o$ is the orientation angle $\theta$, and $f_n$ is the normal force between the gripper and object. ${x}_o^\text{des}$ is set to $\pi$ as the reorientation goal, and $\beta$ is set to $10^4$.

In our experiments, we employed an NVIDIA GeForce RTX 3090 GPU with 24GB of memory. The accuracy parameter for cross approximation was set to $\epsilon=10^{-3}$ for TT-Cross approximation. The maximum rank $r_{\max}$ and discount factor were set to 100 and 0.99, respectively. The continuous variables of state, action and parameter domains were discretized as 50 to 500 points using uniform discretization.

\subsection{Results of domain contraction for policy retrieval}
\label{sec:domain_cont}

Given the parameter-augmented advantage functions learned in Section \ref{sec:base_policy}, we can now retrieve the parameter-conditioned policy for each specific instance through domain contraction (DC). This section aims to demonstrate the unification and flexibility of DC compared to domain randomization (DR) and domain adaptation (DA). To illustrate this, we make a simplifying assumption, without loss of generality, that the system adaptation module provides a uniform distribution as the probabilistic representation of parameters. This distribution spans a range defined by the discretization indices of each dimension (denoted as $w$), as shown in Table \ref{tab:reward_eva} and Fig.~\ref{fig:error_eva}. Specifically, $w=1$ indicates that the exact value of the physical parameter is known, corresponding to DA. $w=N$ reflects no prior knowledge about the model parameters, corresponding to DR. These two variants can be seen as extreme cases of DC, where $w$ determines how finely the distribution is utilized for policy retrieval. Nonetheless, our framework is general and flexible, and can accommodate arbitrary parameter distributions, such as those produced by diffusion models \citep{ho2020denoising}.

\renewcommand{\arraystretch}{1.}
\begin{table}[t]
\centering  
\begin{small}
  
	\caption{Cumulative reward of three manipulation tasks}
		\begin{tabular}{l |c |c|c| c|}
			\toprule
			& {$w=1$}	& {$w=N/20$} & {$w=N/5$} & {$w=N$}\\
			\cline{1-5}
			{Hit}  &1.0   &0.65 $\pm$ 0.21  &0.01 $\pm$ 0.01    &0.02 $\pm$ 0.05 \\
			{Push}  &1.0 &0.99 $\pm$ 0.01  &0.99 $\pm$ 0.03 & 0.93 $\pm$ 0.11 \\
			{Reori.}  &1.0   &0.99 $\pm$ 0.04 & 0.99 $\pm$ 0.07 & 0.85 $\pm$ 0.19 \\			
			\bottomrule
		\end{tabular}
	\label{tab:reward_eva}
 \end{small}
 \end{table}

	%	\vspace{-0.5cm}

\begin{figure*}[t]
	\centering
	\subfloat[\footnotesize \centering Hit]{{\includegraphics[width=0.65\columnwidth]{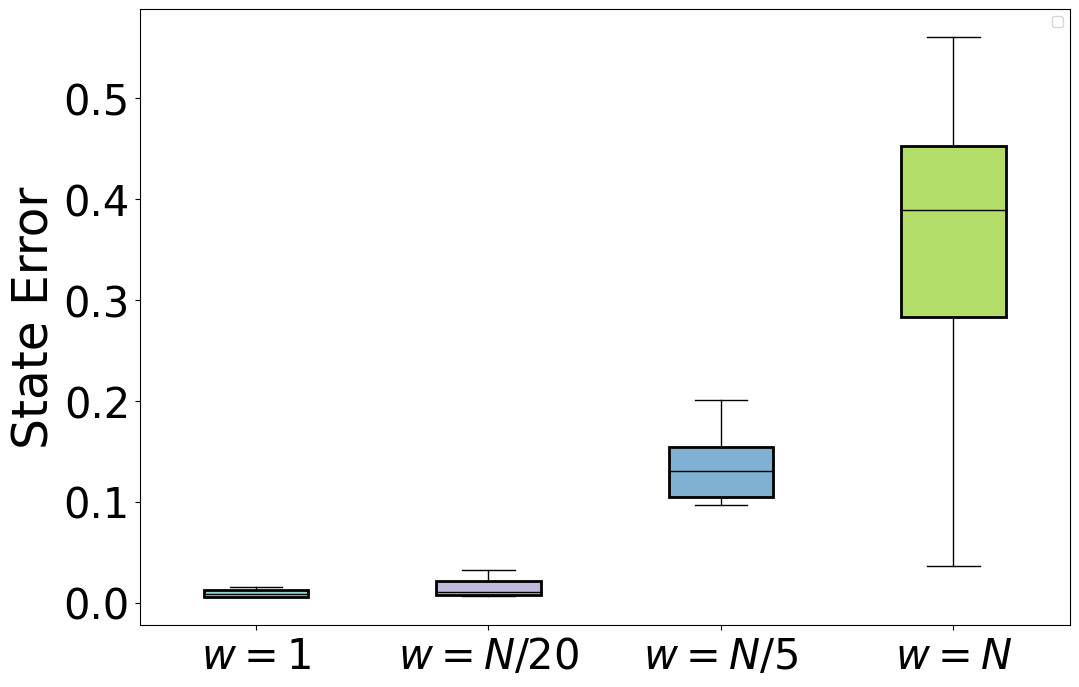}}\label{fig:hit_err}}
	\hfill
	\subfloat[\footnotesize \centering Push]{{\includegraphics[width=0.65\columnwidth]{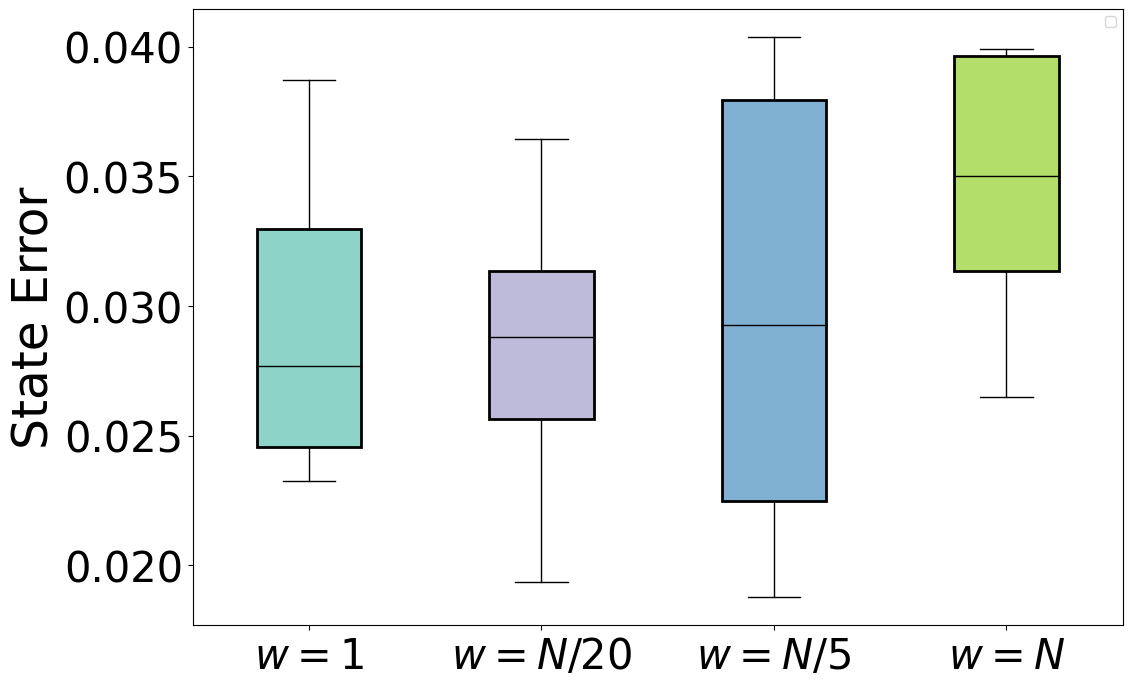}}\label{fig:push_err}}
	\hfill
	\subfloat[\footnotesize \centering Reorientation]{{\includegraphics[width=0.65\columnwidth]{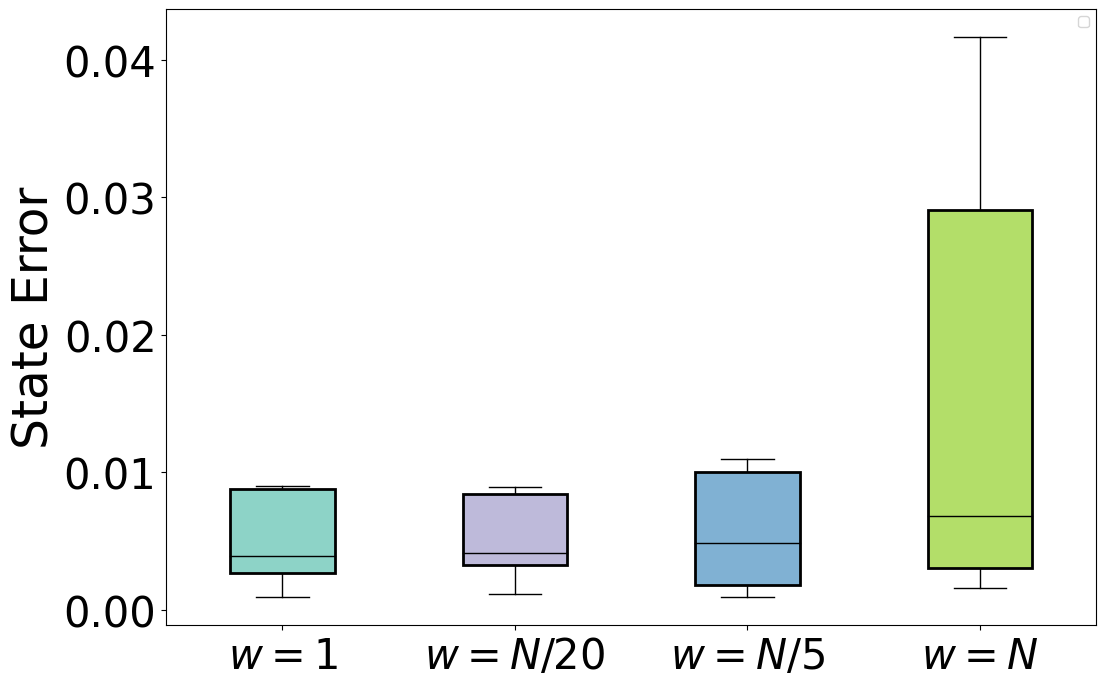}}\label{fig:reori_err}}
	\caption{Comparison of final state error given different estimated parameter distribution.}
	\label{fig:error_eva}
\end{figure*}

Table \ref{tab:reward_eva} and Fig.~\ref{fig:error_eva} demonstrate the comparisons of cumulative reward and final state error, respectively. The state error is quantified as the $\ell_2$ norm of the difference between the final state and the target state. The cumulative reward is normalized using the value obtained through DA. We can observe that DA ($w=1$) typically performs the best in all these three tasks, in particular for the \texttt{Hit} primitive, as it resembles an open-loop control. Once the impact is given from the robot to the object, there is no way to adjust control inputs to influence the object movements further. However, knowing the exactly correct domain parameter is intractable (as shown in Fig.~\ref{fig:param_state_map}). Instead, domain contraction does not require one specific value and accommodate probabilistic estimation. For example, results show that a rough range ($w=N/20$) is sufficient to achieve the target for the \texttt{Hit} primitive. Furthermore, \texttt{Push} and \texttt{Reorientation} can be considered more akin to closed-loop control, which allows for much more rough parameter estimation. As depicted in Table \ref{tab:reward_eva} and Fig.~\ref{fig:error_eva}, a rough distribution with $w=N/5$ is adequate. 

Moreover, based on Table \ref{tab:reward_eva}, we observe that $w=N$ results in the lowest cumulative reward. This is consistent with our assertion that DR typically leads to conservative behaviors. Although DA($w=1$) yields the highest cumulative reward, obtaining precise parameter values is often challenging in the real world. DC bridges the gap between domain adaptation and DR, offering greater flexibility to generate optimal behaviors while leveraging instance-specific rough parameter distributions. This is much practical for real-world contact-rich manipulation tasks.

\subsection{Results of implicit v.s. explicit motor adaptation}
\label{sec:ma_sysID}

\begin{figure}[t]
	\centering
	\includegraphics[width=0.35\textwidth]{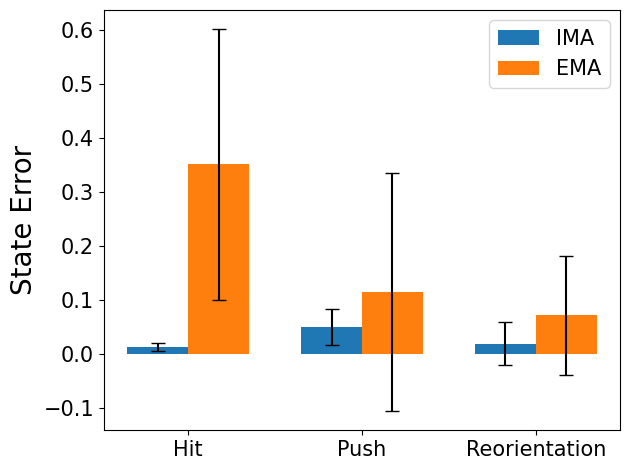}
	\caption{Comparison between IMA and EMA on three manipulation primitives in terms of the final state error, given the same base policy.}
	\label{fig:ema_ima_err}
\end{figure}

\begin{figure}[t]
	\centering
	\includegraphics[width=0.46\textwidth]{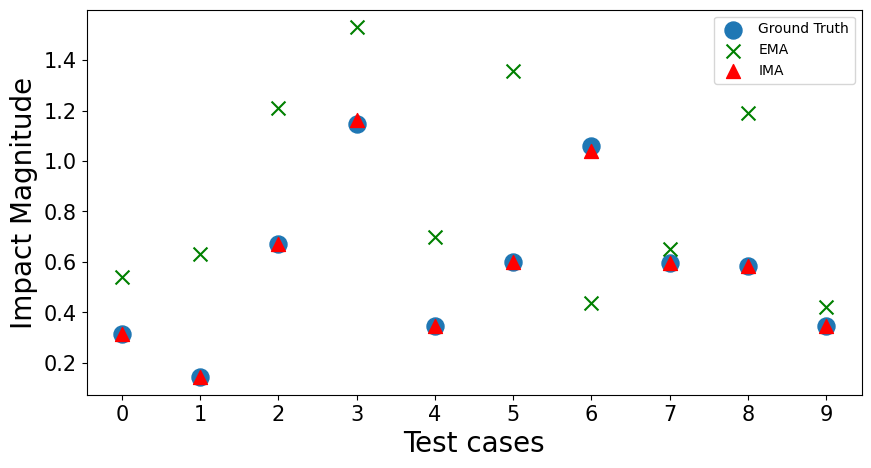}
	\caption{Resulting hitting impact of the policies derived from the ground truth, EMA, and IMA.}
	\label{fig:hit_impact}
\end{figure} 

In this section, we compare the performance of IMA and EMA on each task with 10 different instances, by randomizing system parameters and initial conditions while using the same base policy. As shown in Fig.~\ref{fig:ema_ima_err}, IMA generally outperforms EMA on the three contact-rich manipulation tasks, particularly in the open-loop hitting task. In this paper, we employ a simple MLP with a structure of $512 \times 256 \times 128 \times 64$ for parameter estimation. EMA directly uses the output (denoted as $\hat{\bm{\alpha}}_t$) to retrieve the parameter-conditioned policy from the base policy. In contrast, IMA assumes $\hat{\bm{\alpha}}_t$ as the mean $\bm{\nu}_t$ of a uniform distribution $\mathbb{U}(\bm{\nu}_t - \bm{w}/2, \bm{\nu}_t + \bm{w}/2)$, where the range $w$ is set to $N/20$ for \texttt{Hit} and $N/5$ for \texttt{Push} and \texttt{Reorientation}. 

Fig. \ref{fig:hit_impact} illustrates the computed impacts for 10 hitting instances with different physical parameters, where the ground truth values are represented by blue dots. By utilizing a probabilistic representation of the estimated parameters and domain contraction for policy retrieval, IMA typically produces values closely aligned with the ground truth, whereas EMA often yields suboptimal results. We further analyze the performance of EMA and IMA under varying levels of parameter estimation accuracy, as shown in Fig.~\ref{fig:param_state_map}, and study the impact of different choices for $w$ in DC. When parameters are precisely estimated, EMA achieves the lowest final state error, demonstrating optimal performance. However, as parameter estimation becomes less accurate, greater randomization (a larger $w$) is required to obtain a better policy. The dotted line and gray shaded area in the figure indicate the estimation accuracy achieved in our work, with IMA ($w = N/3$) providing the best results. This analysis highlights the effectiveness of IMA in real-world scenarios where system parameters are unknown and can only be roughly estimated. It also demonstrates that domain randomization ($w = N$) is not always good when some domain knowledge is available, even if imperfect. We believe IMA offers a promising approach to find control policies when we have limited access to the domain knowledge.

Figures \ref{fig:push_ima_ema} and \ref{fig:reori_ima_ema} further illustrate the performance of IMA and EMA in the pushing and reorientation tasks, respectively. In Fig.~\ref{fig:push_ima_ema}, the same base policy is used to push a sugar box and a mustard bottle toward their target configurations, including both position and orientation. While both approaches eventually reach the target positions, IMA outperforms EMA in terms of orientation. This is primarily due to the underactuated dynamics of the planar pushing task, where system parameters such as the friction coefficient and object mass play a critical role. IMA estimates these parameters probabilistically and retrieves parameter-conditioned policies implicitly. This allows for increased tolerance of typical system identification errors, enabling robust contact-rich manipulation with unknown parameters. Similarly, Fig.~\ref{fig:reori_ima_ema} demonstrates the superior performance of IMA over EMA in reorienting an arbitrary object vertically from the bottom to the top.

\begin{figure}[t]
	\centering
	\includegraphics[width=0.4\textwidth]{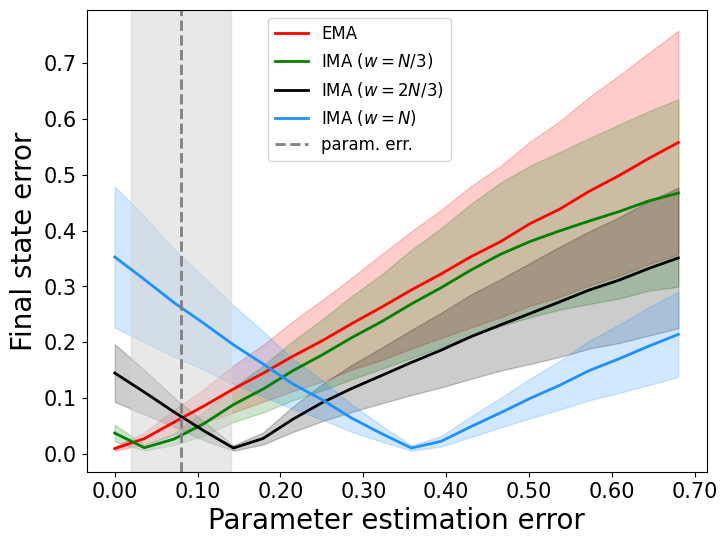}
	\caption{Resulting final state error of different motor adaptation strategies under varying levels of parameter estimation accuracy. The dotted line and gray shaded area represent the mean and standard deviation of the parameter estimation error achieved using the system estimator in this article.}
	\label{fig:param_state_map}
\end{figure}

\begin{figure}[t]
    \centering
    % First row: EMA and IMA pushing block images
    \begin{minipage}{0.24\textwidth}
        \centering
        \includegraphics[width=\linewidth]{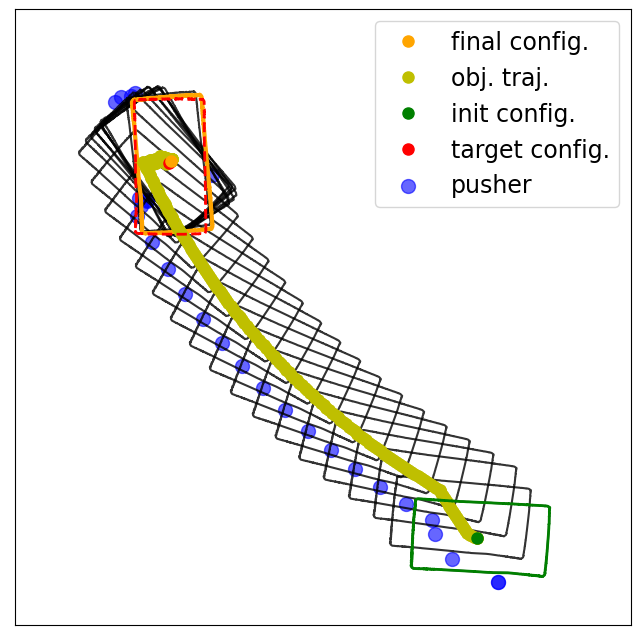}
        % No caption here
    \end{minipage}
    \hfill
    \begin{minipage}{0.24\textwidth}
        \centering
        \includegraphics[width=\linewidth]{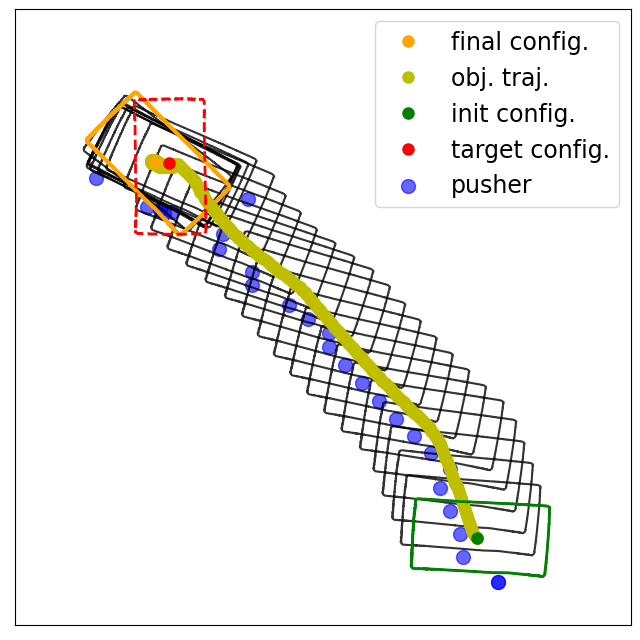}
        % No caption here
    \end{minipage}
    
    \vspace{0.5cm} % Space between rows

    % Second row: EMA and IMA pushing box images
    \begin{minipage}{0.24\textwidth}
        \centering
        \includegraphics[width=\linewidth]{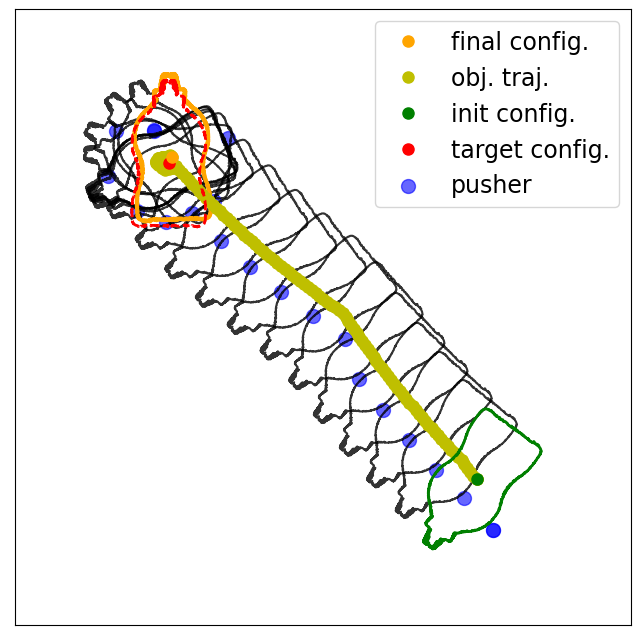}
        % No caption here
    \end{minipage}
    \hfill
    \begin{minipage}{0.24\textwidth}
        \centering
        \includegraphics[width=\linewidth]{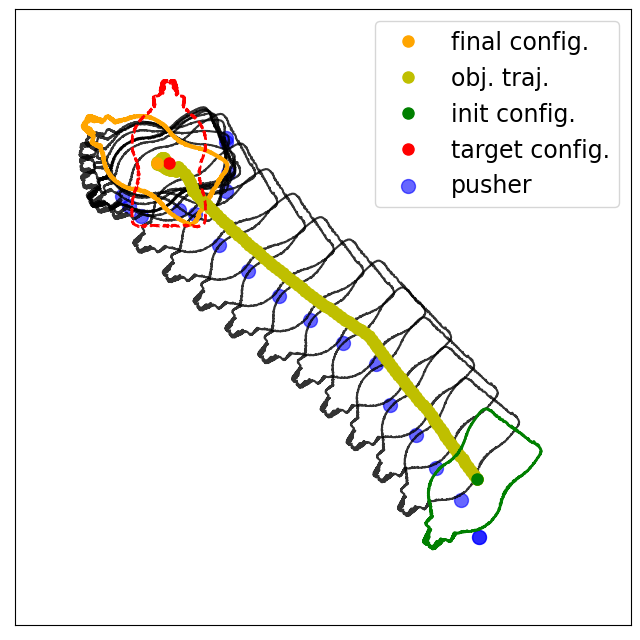}
        % No caption here
    \end{minipage}

    \caption{Planar pushing tasks with a sugar box (\emph{Top}) and a mustard bottle (\emph{Bottom}). \textbf{Left}: Object trajectory produced by IMA; \textbf{Right}: Object trajectory produced by EMA.}
    \label{fig:push_ima_ema}
\end{figure}

\begin{figure}[htbp]
    \centering
    % Second row of images
    \begin{minipage}{0.45\linewidth}
        \centering
        % \subfloat[Traj. resulted by IMA]{\includegraphics[width=0.95\linewidth]{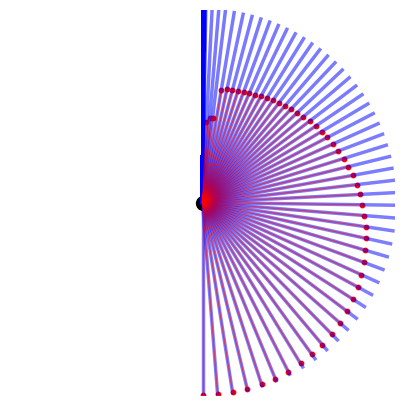}\label{fig:ima_ori}}
        {\includegraphics[width=0.95\linewidth]{images/reori_ima}\label{fig:ima_ori}}
    \end{minipage}
    \hfill
    \begin{minipage}{0.45\linewidth}
        \centering
        % \subfloat[Traj. resulted by EMA]{\includegraphics[width=0.95\linewidth]{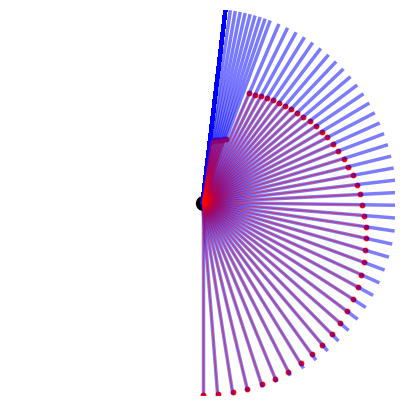}\label{fig:ema_reori}}
        {\includegraphics[width=0.95\linewidth]{images/reori_ema.png}\label{fig:ema_reori}}
    \end{minipage}
    \caption{Trajectories produced by IMA (\textbf{Left}) and EMA (\textbf{Right}) for the reorientation task. The blue lines depict the trajectory of an arbitrary object, with the objective of reorienting it from the bottom configuration to a perfectly vertical top configuration. The lengths of the red lines indicate the angular velocity magnitude, which is indirectly controlled by leveraging gravity and friction forces.}
    \label{fig:reori_ima_ema}
\end{figure}

\begin{figure*}[t]
	\centering
	{{\includegraphics[width=0.65\columnwidth]{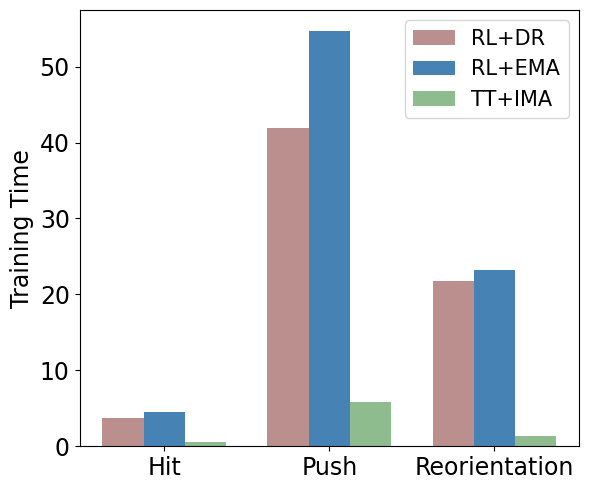}}\label{fig:trainingtime_comp}}
	\hfill
	{{\includegraphics[width=0.65\columnwidth]{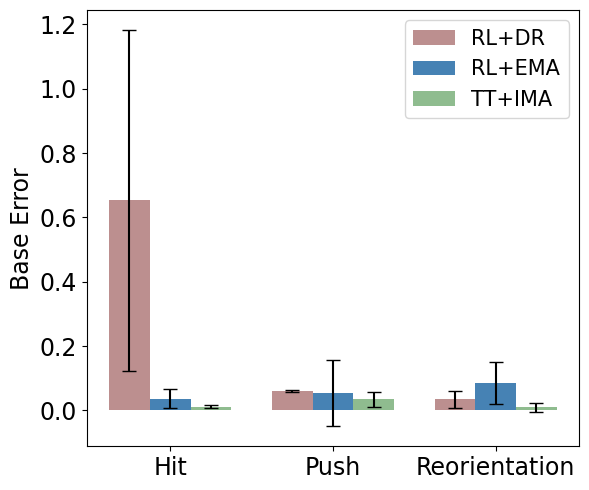}}\label{fig:base_err_com}}
	\hfill
	{{\includegraphics[width=0.65\columnwidth]{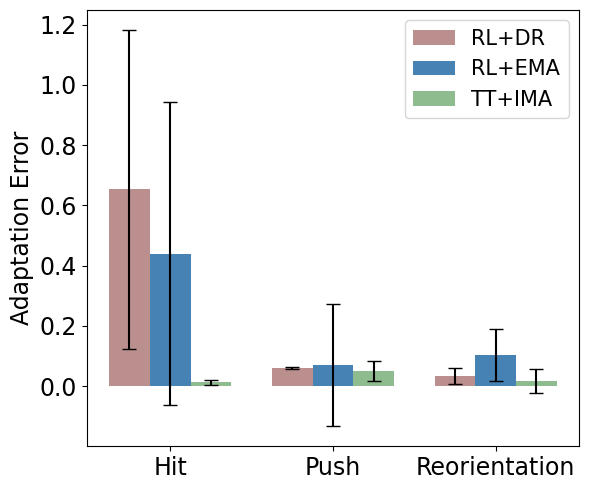}}\label{fig:adapt_err_comp}}
	\caption{Comparison of different robust primitive learning approaches in terms of training time, and state errors of base policies and adapted policies. The unit of training time is seconds for \texttt{Hit} and minutes for \texttt{Push} and \texttt{Reorientation}. Errors are calculated based on the $\ell_2$ norm between the final configuration and the target configuration.}
	\label{fig:comp_robust_mp}
\end{figure*}

\begin{figure*}[t] 
	\centering
	\subfloat[\footnotesize \centering Initialization]{{\includegraphics[width=0.5\columnwidth]{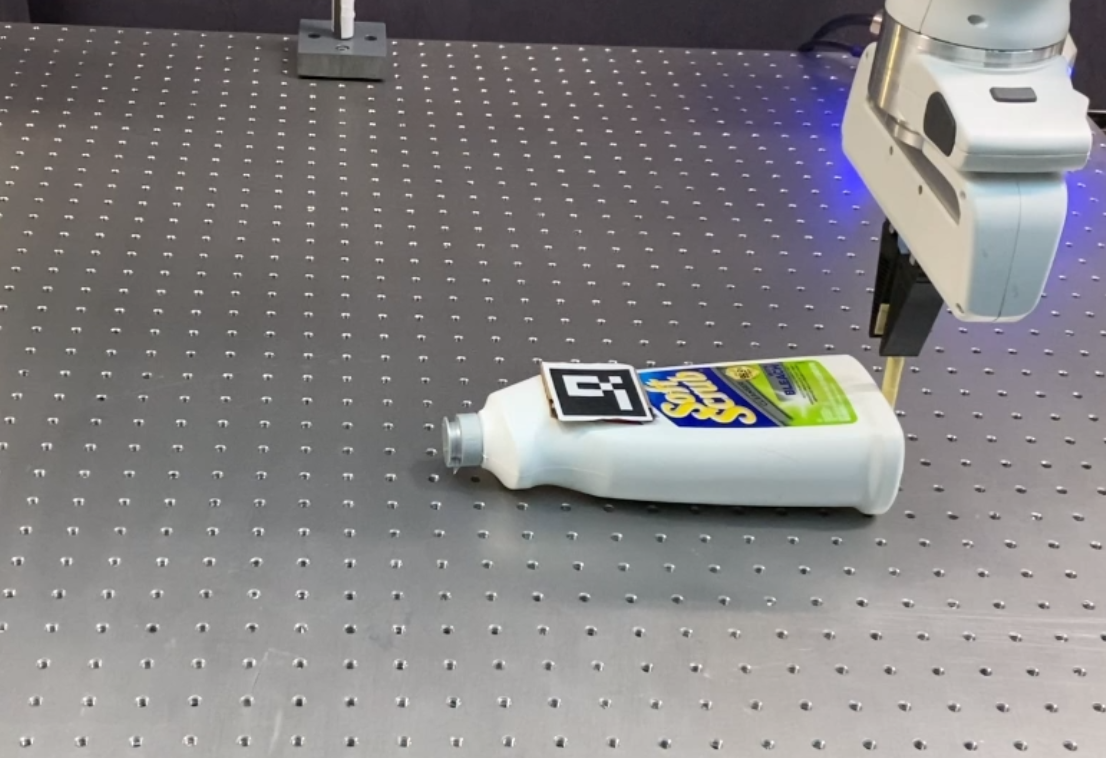}}\label{fig:robot_init}}
	\subfloat[\footnotesize \centering Pushing]{{\includegraphics[width=0.5\columnwidth]{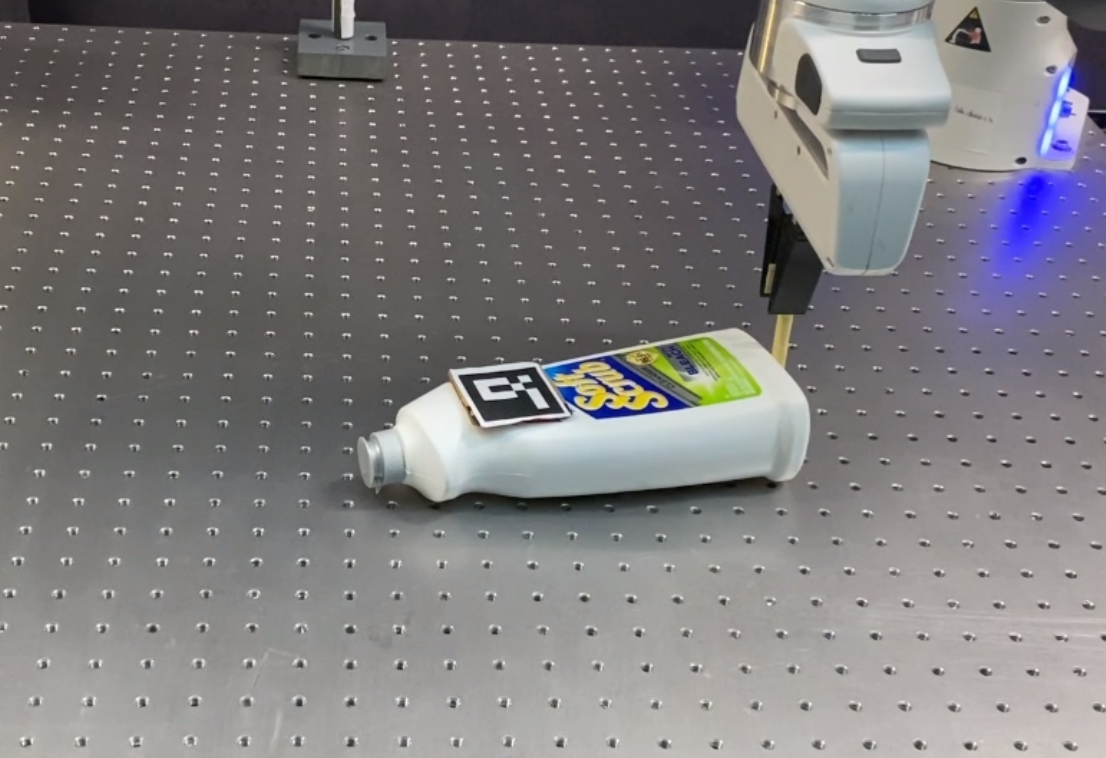}}} 
	\subfloat[\footnotesize \centering Contact switching]{{\includegraphics[width=0.5\columnwidth]{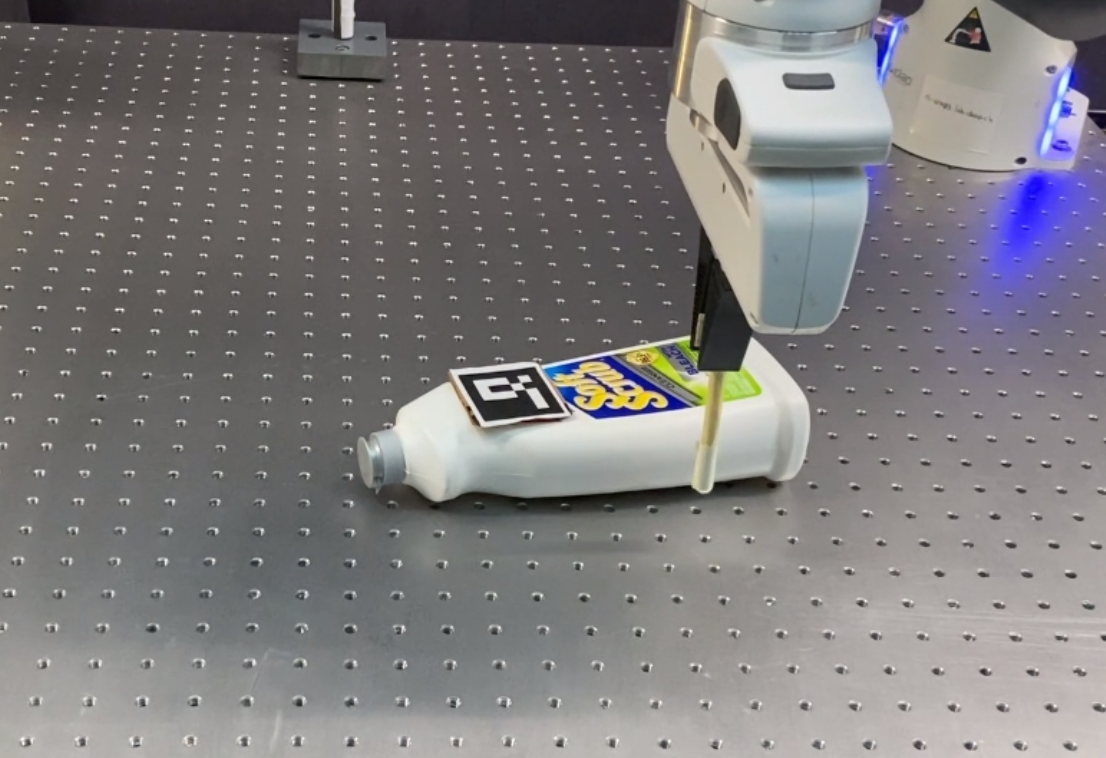}}} 
		\subfloat[\footnotesize \centering Disturbance]{{\includegraphics[width=0.5\columnwidth]{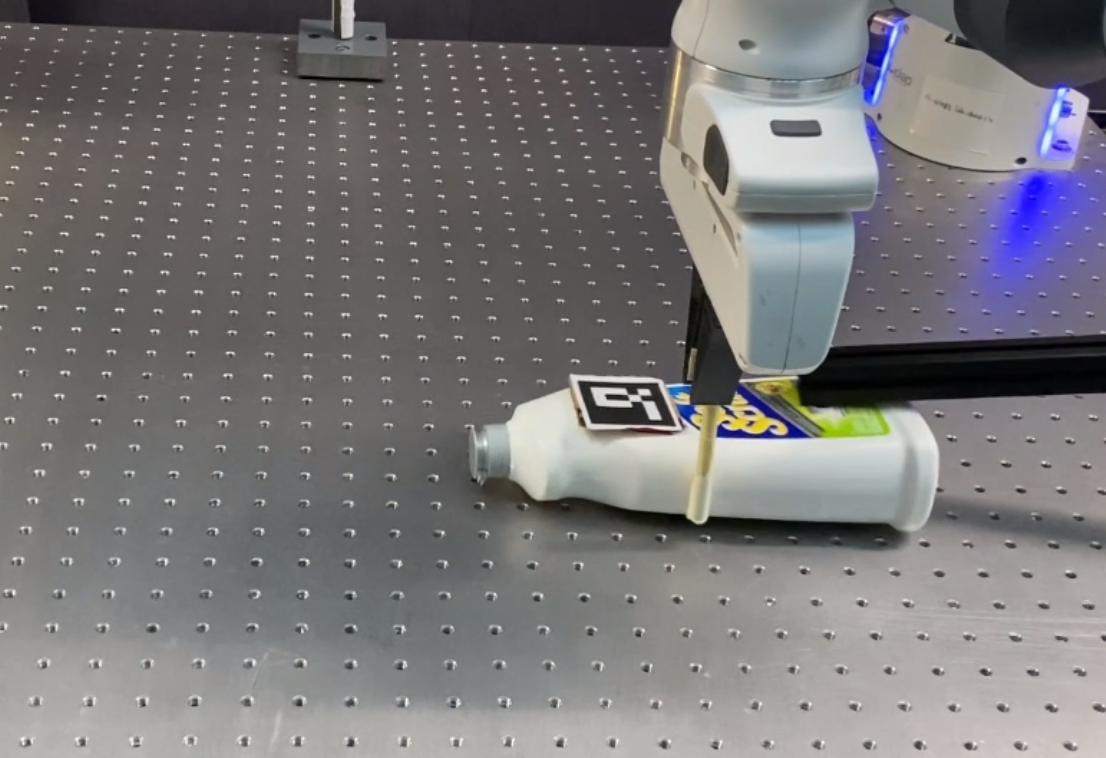}}} \\
	\subfloat[\footnotesize \centering Contact switching]{{\includegraphics[width=0.5\columnwidth]{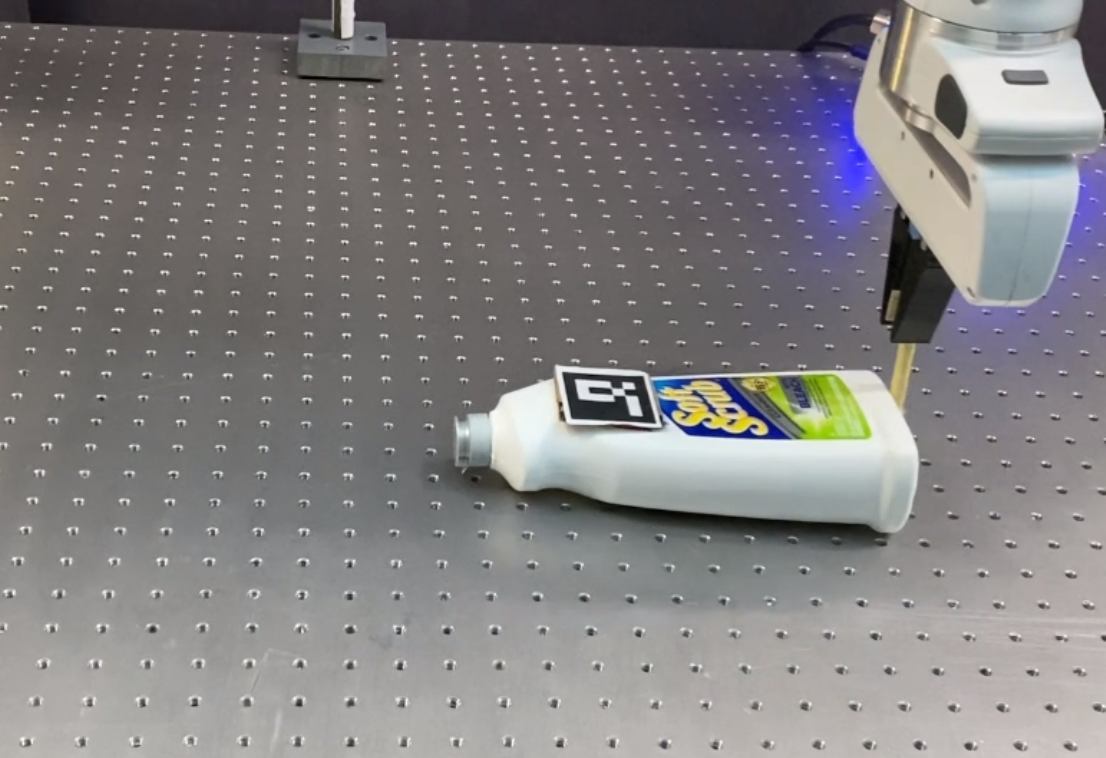}}} 
	\subfloat[\footnotesize \centering Pushing]{{\includegraphics[width=0.5\columnwidth]{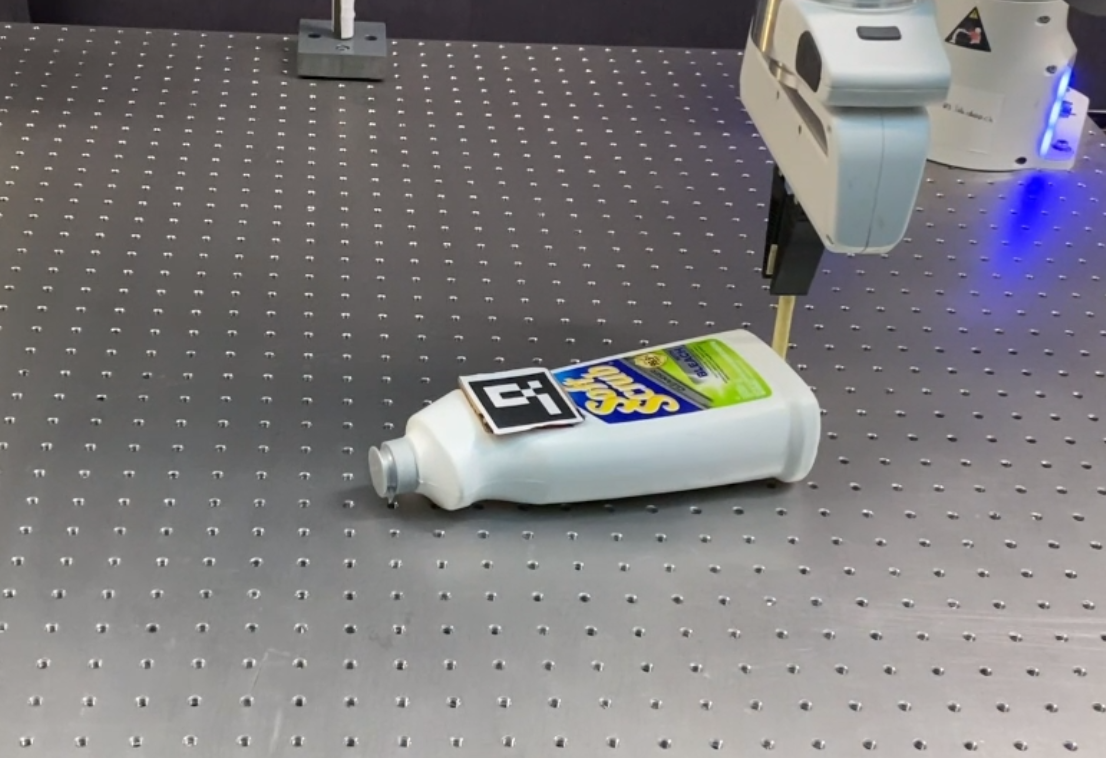}}} 
	\subfloat[\footnotesize \centering Contact switching]{{\includegraphics[width=0.5\columnwidth]{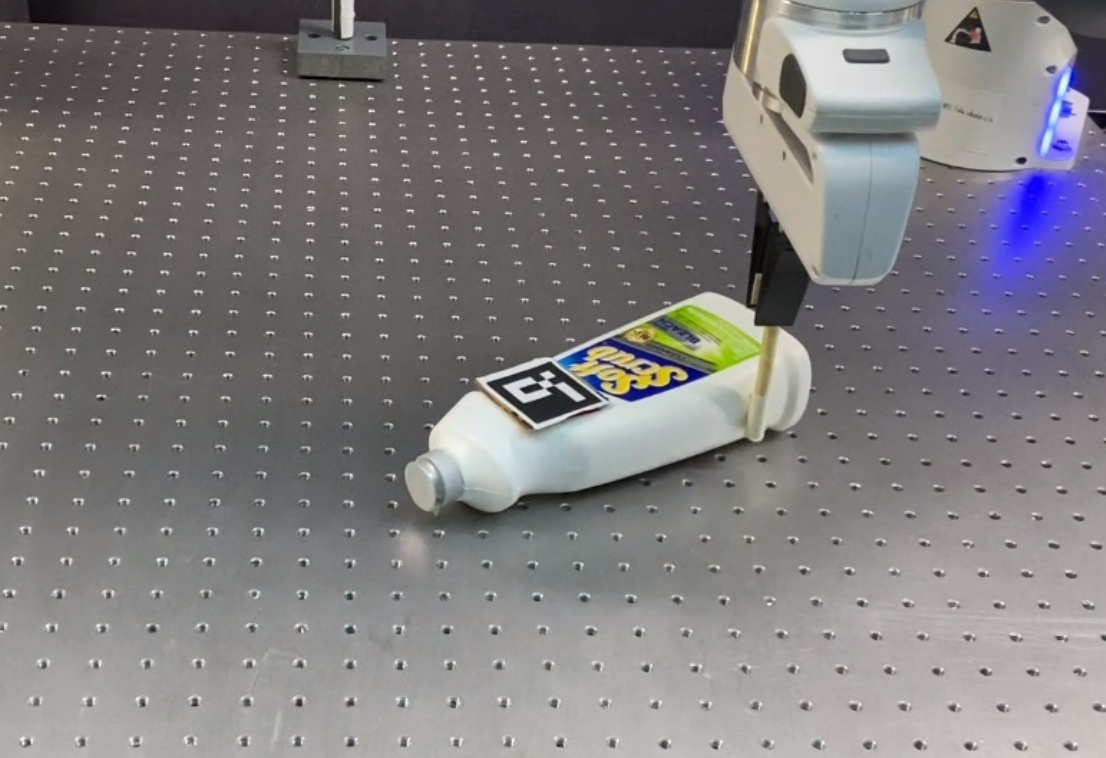}}} 
	\subfloat[\footnotesize \centering Target Reaching]{{\includegraphics[width=0.5\columnwidth]{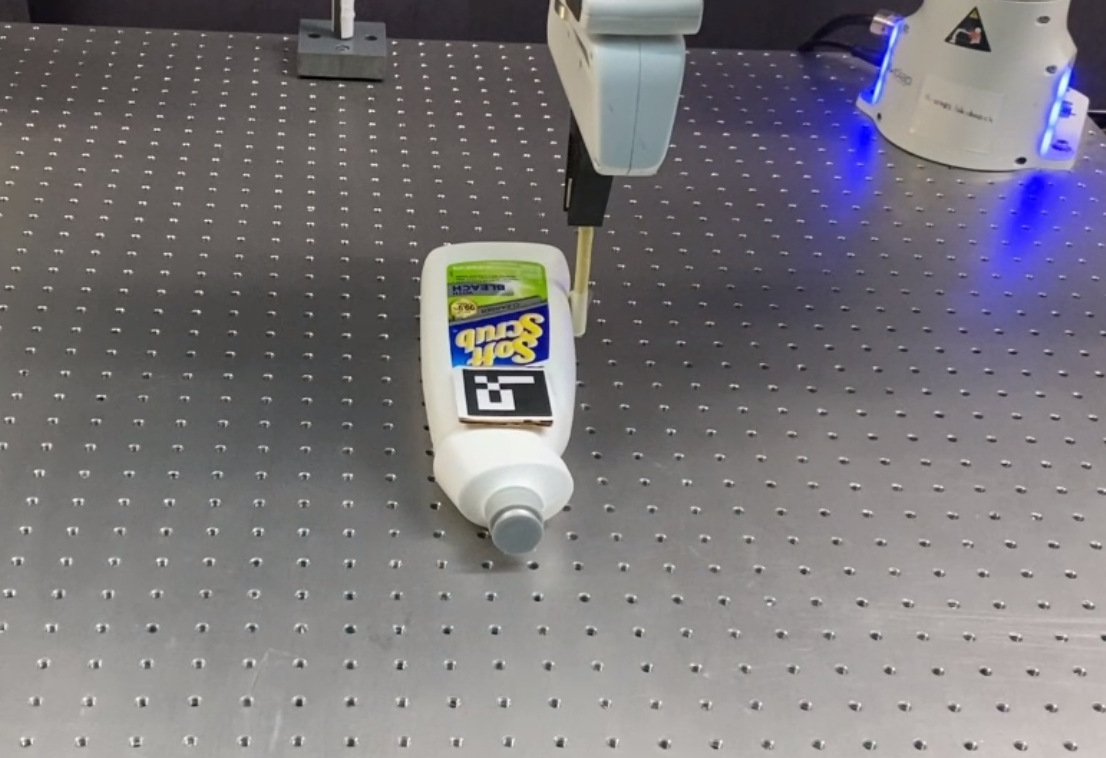}}\label{fig:robot_end}}
	\caption{An example of planar pushing task: bleach bottle on a metal surface.  The system begins in the initial configuration (a), aiming to manipulate the bottle to reach the target configuration (h). The robot first makes contact with the bottle and pushes it slightly (b), then switches the contact point (c). When an external disturbance (d) is introduced by a human, the robot adapts by switching the contact point again (e) to accommodate the unexpected change. It continues pushing the bottle further (f), performs another contact switch (g), and ultimately reaches the target configuration (h).}
	\label{fig:keyframe}
	\vspace{-0.5cm}
\end{figure*}

\subsection{Results of robust manipulation primitive learning}

We also compared our full pipeline with two widely used robust policy learning approaches: reinforcement learning with domain randomization (RL+DR) \citep{tobin2017domain} and reinforcement learning with explicit motor adaptation (RL+EMA) \citep{yu2017preparing, qi2023hand}. We utilize Soft Actor-Critic (SAC) \citep{haarnoja2018soft} for policy learning in both RL+DR and RL+EMA, as SAC has been shown to be the state-of-the-art RL algorithm for such contact-rich manipulation tasks, especially planar pushing, as demonstrated in \citep{potters2022learning, cong2022reinforcement}. Our implementation is based on Stable-Baselines3 \citep{stable-baselines3}, using a Multilayer Perceptron (MLP) with a 64 $\times$ 64 $\times$ 64 architecture as the policy network. The discount factor is set to 0.99, and the learning rate to 0.001.

The quantitative results are presented in Fig.~\ref{fig:comp_robust_mp}, including the time required for policy training and the state error of the retrieved policy. The \textit{base\_error} and \textit{adaptation\_error} refer to the final state error of the base policy with the true parameter and the parameter-conditioned policy after motor adaptation, respectively. For RL+DR, the \textit{base\_error} and \textit{adaptation\_error} are the same since no motor adaptation is performed.

Notably, although the \texttt{Hit} task is a one-shot manipulation task in which proprioceptive history is unavailable, the motor adaptation module works by observing the trajectories of past rollouts, similarly to how humans perform such tasks. The results show that our method requires significantly less time for base policy training compared to the other methods, while achieving the lowest \textit{base\_error}, highlighting the advantages of leveraging TT for learning control policies in contact-rich manipulation tasks.

Furthermore, our method (TT+IMA) achieves the lowest final state error among the three approaches, demonstrating the overall effectiveness of our complete approach for robust contact-rich manipulation tasks, with implicit action representation playing a crucial role in retrieving robust policies. An additional observation is that RL+EMA often exhibits the highest standard deviation, indicating that its control policies perform extremely poorly in some instances, whereas IMA has a better statistical performance over the diverse instances.

% We also compared the time used for retrieving the parameter-conditioned policy in either core level or function level. Table \ref{tab:time} shows that TT structure allows for much more efficient retrieval via core-level products compared with function-level products, facilitating real-time motor adaptation.

\renewcommand{\arraystretch}{1.}
\begin{table}[t]
\centering   
\begin{small}
 
	\caption{Time required for computing parameter-conditioned advantage function using core-level or function-level operations}
		\begin{tabular}{l |c |c |c|}
			\toprule
			& {core-level}	& {function-level}\\
			\cline{1-3}
			{Hit} &0.016s $\pm$ 0.002s   & 0.720s $\pm$ 0.236s   \\
			{Push}  & 0.075s $\pm$ 0.003s &18.56s $\pm$ 4.748s   \\
			{Reorientation} & 0.018s $\pm$ 0.003s   & 8.494s $\pm$ 0.561s  \\			
			\bottomrule
		\end{tabular}
	\label{tab:time}
\end{small}
\end{table}

\subsection{Sensitivity analysis and ablation study}
\label{sec:ablation}

Our framework involves several hyperparameters, such as the maximum TT-rank, the discretization granularity, and the bandwidth of the uniform distribution. To analyze their impact on policy performance, we conduct a sensitivity analysis across a range of values for each hyperparameter. Figure~\ref{fig:sensitivity_analysis} shows how policy performance, measured by the $\ell_2$ error between the final and target states, varies with the maximum TT-rank and discretization granularity. For comparability across tasks, the errors are normalized by the worst-case performance in each task.

From Figure~\ref{fig:sensitivity_analysis}, we observe that setting the maximum rank $r_{\text{max}}$ above 20 leads to consistently low errors across all tasks. For \texttt{Push} and \texttt{Hit}, even lower ranks (e.g., $r_{\text{max}} = 10$ or $5$) are sufficient, indicating that the underlying policies lie in a low-rank functional space. A relatively low tensor rank is sufficient to capture the essential structure, and policy performance remains stable when the rank is increased further. Empirically, we observe that the piecewise-smooth dynamics \citep{toussaint2020describing} commonly encountered in contact-rich manipulation tasks induce structured, low-dimensional variations in the state-action space. The corresponding advantage functions often exhibit a low-rank structure, which can be efficiently captured using the TT representation.

% This confirms the effectiveness of the TT representation in capturing the essential structure for policy learning and shows that IMA is not sensitive to rank, as long as it is sufficiently expressive.

Similarly, the policy is robust to the choice of discretization granularity. As shown in Figure~\ref{fig:sensitivity_analysis}, when the number of discretization intervals exceeds 8, performance does not change much with further refinement, indicating insensitivity to this hyperparameter. This robustness can be attributed to the fact that low-rank TT models provide smooth function approximations, which support effective interpolation between discretized points (as discussed in Section~\ref{sec:function_apprx}). While extremely coarse discretization may degrade performance, in practice, setting the number of intervals to 10 or 20 generally yields stable and accurate results.

For the influence of the bandwidth for the uniform distribution used in probabilistic domain adaptation, we refer the readers to Section~\ref{sec:domain_cont} for details. In summary, a coarse bandwidth (\( w = N/5 \), where \( N \) is the number of discretization points) is sufficient for closed-loop tasks such as \texttt{Push} and \texttt{Reorientation}. In contrast, \texttt{Hit}, as a one-shot (open-loop) task, benefits from a more precise distribution, with \( w = N/20 \) being adequate. In general, the bandwidth parameter for other contact-rich tasks can be determined similarly based on whether the task is open-loop or closed-loop, without requiring significant tuning effort.

To further demonstrate the efficiency of the TT representation, we compared it against non-TT function approximation methods, particularly Neural Networks (NN). In our framework, both TT and NN can be used to learn parameter-augmented base policies and to approximate parameter-augmented advantage functions. The key question is which method offers greater efficiency in the subsequent steps: computing the parameter-conditioned advantage function and retrieving the corresponding policy via an $\arg\max$ operation.

\begin{figure}[htbp]
	\centering
	\includegraphics[width=0.45\textwidth]{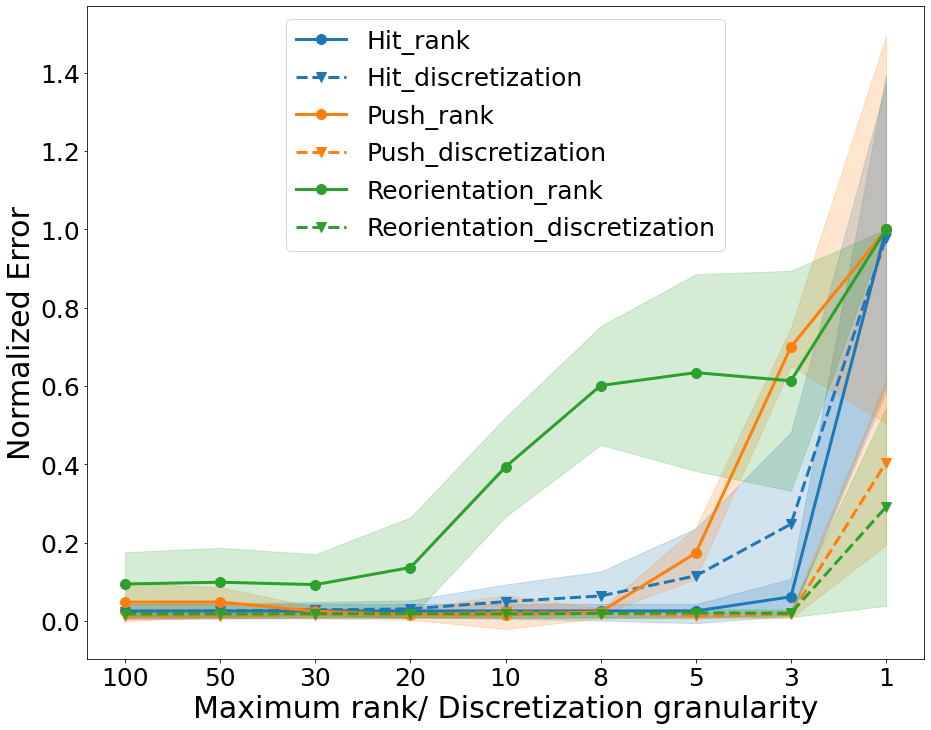}
	\caption{Sensitivity analysis of policy performance with respect to the maximum TT-rank (solid lines) and discretization granularity (dashed lines). The y-axis shows the normalized $\ell_2$ error between the final and target states, scaled by the worst-case performance in each task. The x-axis denotes the maximum TT-rank and the number of discretization intervals (non-uniformly spaced to enhance readability in the low-rank and coarse-discretization regions). The results demonstrate that policy performance is not highly sensitive to these hyperparameters.}
    % Results demonstrate that low-rank TT models with coarse discretization can achieve accurate and robust policy learning.}}
	\label{fig:sensitivity_analysis}
\end{figure}

\begin{table*}[htbp]
\centering   
\begin{small}
	\caption{Comparison of TT and NN representation for parameter-conditioned policy retrieval. 
	Error refers to the task execution error using the retrieved policy, while Opt. Time indicates the time required to perform the $\argmax$ operation.}
		\begin{tabular}{l |c c|c c| c c|}
			\toprule
			& \multicolumn{2}{c|}{TT Optimization} & \multicolumn{2}{c|}{TT Refinement} & \multicolumn{2}{c|}{NN Optimization} \\
			\cline{2-7}
			& {Task Error} & {Opt. Time (s)} & {Task Error} & {Opt. Time (s)}  & {Task Error} & {Opt. Time (s)} \\
			\midrule
			{Hit} & 0.010 $\pm$ 0.008 & 0.003 $\pm$ 0.001 & 0.002 $\pm$ 0.001 & 0.015 $\pm$ 0.001 & 0.002 $\pm$ 0.004 & 0.346 $\pm$ 0.021 \\
			{Push} & 0.034 $\pm$ 0.024 & 0.013 $\pm$ 0.001 & 0.033 $\pm$ 0.024 & 0.146 $\pm$ 0.001 & 1.469 $\pm$ 0.574 & 2.020 $\pm$ 0.002 \\
			{Reorientation} & 0.061 $\pm$ 0.104 & 0.002 $\pm$ 0.001 & 0.061 $\pm$ 0.104 & 0.062 $\pm$ 0.001 & 0.751 $\pm$ 0.680 & 1.514 $\pm$ 0.092 \\
			\bottomrule
		\end{tabular}
	\label{tab:ablation_nn_tt}
\end{small}
\end{table*}

Table~\ref{tab:time} highlights the significant computational advantage of TT in computing the advantage function. Specifically, TT models support efficient \emph{core-level} operations, which avoid combinatorial computation across the multi-dimensional parameter space, leading to a complexity of \(\mathcal{O}(Ndr^2)\), where \(N\) is the number of discretization points per dimension, \(d\) is the dimensionality of the parameter space, and \(r\) is the TT rank, which typically remains small in practice, as demonstrated in our experiments. In contrast, NN-based methods (or any non-TT models) rely on \emph{function-level} operations, resulting in a complexity of \(\mathcal{O}(N^{d})\), which makes the computation substantially slower.

% Table~\ref{tab:time} highlights the significant computational advantage of TT in computing the advantage function. Specifically, TT models support efficient \emph{core-level} operations, which avoid combinatorial computation across the multi-dimensional parameter space. In contrast, NN-based methods (or any non-TT models) rely on \emph{function-level} products, which 
% are substantially slower. 
% For example, in the \texttt{Push} task, TT computes the parameter-conditioned advantage in under 0.1 seconds, whereas function-level methods take nearly 20 seconds.

After computing the parameter-conditioned advantage function, the next step is policy retrieval via an $\argmax$ operation. As shown in Table~\ref{tab:ablation_nn_tt}, TT again demonstrates superior efficiency. Direct TT-based optimization (TT Optimization) yields low-error policies with minimal computation time, achieving near-global solutions within milliseconds. In contrast, NN-based approaches (NN Optimization) typically rely on gradient-based solvers. In this work, we specifically adopt the Adam optimizer~\citep{kingma2014adam} as a representative method for comparison. The results indicate that this approach is slower and more susceptible to suboptimal convergence, particularly in complex tasks such as \texttt{Push} and \texttt{Reorientation}.

To further improve accuracy, we also evaluated a hybrid approach (TT Refinement), which uses TT optimization for initialization, followed by a gradient-based Newton-type method. While this refinement slightly improves performance, the gain is often marginal compared to the additional computational cost. In the tasks considered in this work, TT optimization alone is sufficient to achieve high-quality results. These findings highlight the strong potential of TT representation for efficient and robust policy learning in complex, contact-rich manipulation tasks.

\subsection{Real-robot experiments: planar push}
\label{sec:real_exp}

We validated the proposed method for the planar pushing task using a 7-axis Franka robot and a RealSense D435 camera. The manipulated objects included a sugar box and a bleach cleanser from the YCB dataset \citep{calli2015ycb}, each with different shapes and masses, as shown in Fig.~\ref{fig:setup}. The friction coefficients between the objects and the table were varied by using a metal surface and plywood, respectively. Note that it is easier to control the robot kinematically rather than using force control. We therefore leveraged the ellipsoidal limit surface to convert the applied force to velocity, resulting in the motion equations shown in \citep{hogan2020feedback, xue2023guided}. We trained a parameter-augmented policy in simulation and then applied it in the real world through domain contraction. The experiments demonstrate the effectiveness of the obtained parameter-conditioned policies in manipulating instances with diverse parameters. Additionally, external disturbances were introduced by humans, showcasing the reactivity of the retrieved policy. Figure \ref{fig:keyframe} presents keyframes of the learned planar pushing primitive applied to a bleach bottle on a metal surface, exemplifying a specific case within a variety of diverse pushing scenarios. Further results are demonstrated in the accompanying video.

\section{Conclusion and Future Work}
\label{sec:conclusion}

In this article, we propose \textit{implicit motor adaptation} for robust manipulation primitive learning, which enables parameter-conditioned policy retrieval given a probabilistic parameter distribution rather than a single estimate. We prove that, to achieve this, the policy must be represented implicitly as the $\arg\max$ of the advantage function, rather than treated as an explicit feed-forward function. The state-value and advantage functions are represented in tensor train (TT) format, facilitating efficient policy retrieval through operations at the tensor core level instead of at the function level. Our approach differs from the well-known \textit{explicit motor adaptation} framework, which either requires precise system identification or additional student policy training. Theoretical analysis and numerical results demonstrate the superior performance of \textit{implicit motor adaptation} over \textit{explicit motor adaptation}. Simulation and real-world experiments further validate the effectiveness of our approach in contact-rich manipulation tasks under parameter uncertainty and external disturbances.

Our work paves the way for robust sim-to-real transfer. It can be easily integrated with more general policy learning approaches, such as reinforcement learning methods and classical approaches (e.g., LQR), provided the base policy can be implicitly represented with a state-action value function. The key insight is to represent the base policy implicitly and retrieve the parameter-conditioned policy probabilistically. This concept also holds promise for achieving robust behavior cloning, where the policy can be represented through energy functions (as in implicit behavior cloning \citep{florence2022implicit}) or denoising functions (as in diffusion policy \citep{chi2023diffusion}).

In this article, we used TT to represent the value function and the advantage function, both obtained via TT-Cross approximation \citep{oseledets2010_ttcross1, usvyatsov2022tntorch}. Although TT-Cross enables efficient global approximation by actively querying function values, it struggles to scale to very high-dimensional settings. In the future, we plan to combine TT with neural networks in a data-driven manner \citep{dolgov2023data, shetty24thesis} to address this limitation.

Additionally, we employed a simple MLP for system identification, assuming that the parameters follow a uniform distribution. This approach could be extended by adopting more advanced models, such as diffusion models \citep{ho2020denoising}. Large Visual-Language Models \citep{gao2024physically} could also be explored to leverage their commonsense understanding of domain knowledge from scenario images.

\section*{Acknowledgments}
This work was supported by the China Scholarship Council (grant No.202106230104), and by the State Secretariat for Education, Research and Innovation in Switzerland for participation in the European Commission's Horizon Europe Program through the INTELLIMAN project (\url{https://intelliman-project.eu/}, HORIZON-CL4-Digital-Emerging Grant 101070136) and the SESTOSENSO project (\url{http://sestosenso.eu/}, HORIZON-CL4-Digital-Emerging Grant 101070310). We thank Jiacheng Qiu for suggestions about the implementation of RL baselines.

\bibliographystyle{plainnat}
\bibliography{references}

% \begin{thebibliography}{99}
% \bibitem[Kopka and Daly(2003)]{R1}
% Kopka~H and Daly~PW (2003) \textit{A Guide to \LaTeX}, 4th~edn.
% Addison-Wesley.

% \bibitem[Lamport(1994)]{R2}
% Lamport~L (1994) \textit{\LaTeX: a Document Preparation System},
% 2nd~edn. Addison-Wesley.

% \bibitem[Mittelbach and Goossens(2004)]{R3}
% Mittelbach~F and Goossens~M (2004) \textit{The \LaTeX\ Companion},
% 2nd~edn. Addison-Wesley.

% \end{thebibliography}

\end{document}